\documentclass[conference]{IEEEtran}
\usepackage{times}

\usepackage[numbers]{natbib}
\usepackage{multicol}
\usepackage[bookmarks=true]{hyperref}

\usepackage[utf8]{inputenc} 
\usepackage[T1]{fontenc}    
\usepackage{hyperref}       
\usepackage{url}            
\usepackage{booktabs}       
\usepackage{amsfonts}       
\usepackage{amsmath,amssymb,amsfonts}        
\usepackage{nicefrac}       
\usepackage{microtype}      
\usepackage{xcolor}         
\definecolor{darkgreen}{rgb}{0.0, 0.5, 0.0}
\usepackage{graphicx}       
\usepackage{subcaption}
\usepackage{amsthm}
\usepackage{algorithm}     
\usepackage{algorithmicx}
\usepackage{algpseudocode}
\usepackage{tasks}
\settasks{
  label         = {\arabic*)},  
  label-width   = 1.5em,
  item-indent   = 20pt,
  column-sep    = 3em,
  after-skip    = 1em          
}

\usepackage{float}
\usepackage{placeins}
\usepackage{wrapfig}
\usepackage{rotating} 

\definecolor{darkcyan}{HTML}{0080ff} 
\definecolor{lavender}{HTML}{CC33FF} 
\definecolor{cyan_accent}{HTML}{33E6FF}
\definecolor{electric_green}{HTML}{33CC00}
\hypersetup{
    colorlinks=true,  
    linkcolor=electric_green,  
    citecolor=lavender,  
    urlcolor=black    
}

\theoremstyle{plain}
\newtheorem{definition}{Definition}
\newtheorem{theorem}{Theorem}
\newtheorem{lemma}{Lemma}

\makeatletter
\newcommand{\listofappendices}{%
  \section*{Appendix Contents}%
  \@starttoc{app}
}

\newcommand{\appsection}[1]{%
  \section{#1}%
  \addcontentsline{app}{section}{\protect\numberline{\thesection}#1}%
}

\makeatother

\begin{document}

\title{Diversity You Can Actually Measure: A Fast, Model-Free Diversity Metric for Robotics Datasets}



\title{Diversity You Can Actually Measure: A Fast, Model-Free Diversity Metric for Robotics Datasets}

\author{%
Sreevardhan Sirigiri\textsuperscript{1} \quad
Nathan Samuel de Lara\textsuperscript{2} \quad
Christopher Agia\textsuperscript{3} \quad
Florian Shkurti\textsuperscript{2,4} \quad
Fabio Ramos\textsuperscript{1,5}%
}

\maketitle

\begingroup
\renewcommand\thefootnote{}%
\footnotetext{%
\noindent\rule{2.2cm}{0.4pt}\\[0.3em]
\fontsize{9.0pt}{11pt}\selectfont
\textsuperscript{1}The University of Sydney.\;
\textsuperscript{2}University of Toronto.\;
\textsuperscript{3}Stanford University.\;
\textsuperscript{4}Allen Institute for Artificial Intelligence.\;
\textsuperscript{5}NVIDIA, USA\;
}
\footnotetext{%
\vspace{0.3em}
\noindent\\[0.3em]
{Preprint under review.}
}
\addtocounter{footnote}{-1}%
\endgroup


%

\IEEEpeerreviewmaketitle

\begin{abstract}
Robotics datasets for imitation learning typically consist of long-horizon trajectories of different lengths over states, actions, and high-dimensional observations (e.g., RGB video), making it non-trivial to quantify diversity in a way that respects the underlying trajectory structure and geometry. We extend Shannon and von Neumann entropy to this setting by defining signature transform-based entropy on the Gram matrix of a signature kernel over demonstrations, yielding entropy and diversity metrics that operate directly on the demonstration dataset. Building on these metrics, we study how dataset diversity affects generalization performance in robot imitation learning and propose a simple, model-free way to curate diverse demonstrations. We introduce \textsc{FAKTUAL} (FAst trajectory Kernel enTropy cUration for imitation Learning), a data curation algorithm that selects a subset of demonstrations maximizing entropy given a subset-size budget. \textsc{FAKTUAL} is fully model-free, requires no access to the imitation policy or rollouts, and adds negligible overhead relative to policy training. We evaluate our approach on image and state-based RoboMimic and MetaWorld benchmarks, as well as four real-world manipulation tasks. Across tasks and architectures, diversity-aware curation with \textsc{FAKTUAL} consistently improves downstream success rates over random selection, while being substantially more computationally efficient compared to recent robot data curation methods. Our results suggest that the entropy of demonstration datasets is a practical tool for understanding and improving dataset diversity in robot imitation learning.
\end{abstract}

\section{Introduction} \label{sec:intro}


Although many of the most prominent advances in deep learning have been driven by architectural innovations and increases in dataset scale, the diversity of the training data is also a crucial determinant of model performance. 
Diversity of datasets is a desirable criterion across many areas of machine learning, including dataset curation, generative modeling, reinforcement learning, active learning, and decoding algorithms~\cite{vendi_score}. Robotics, in particular imitation learning (IL), is no exception to this~\cite{o2024open, black2410pi0, intelligence2025pi05visionlanguageactionmodelopenworld, gao2024efficientdatacollectionrobotic, shi2025diversityneedscalablerobotic}. Recent works on large and (anecdotally) diverse datasets have been a central driver of scaling in robot imitation learning~\cite{o2024open, khazatsky2024droid, rt12022arxiv, rt22023arxiv, octo_2023, pmlr-v270-kim25c, black2410pi0, intelligence2025pi05visionlanguageactionmodelopenworld}. 

Despite growing interest in leveraging large robot datasets, there remains no widely accepted measure of diversity for such data. Moreover, quantifying diversity in robotics datasets in a principled way is difficult and non-trivial. This challenge stems from two main reasons: (1) demonstrations are often long-horizon trajectories/paths of varying lengths and consist of multiple modalities such as states, actions/control inputs and high-dimensional sensory observations (e.g., video); and (2) a diversity metric requires a mathematical notion of similarity between demonstrations that goes beyond pointwise distance and instead respects the underlying trajectory structure and geometry. 

In this work, we leverage the signature kernel to define a notion of diversity. The signature transform and its associated signature kernel~\cite{lee2023signaturekernel, primerSK} provide a principled way to represent and compare paths/trajectories of variable length while preserving important aspects of their geometric and sequential structure. Hence, the signature kernel induces a similarity measure that respects trajectory structure and geometry rather than relying on pointwise distances. Prior work has demonstrated that signature-kernel-based methods discriminate between robotic paths in applications such as trajectory optimization, path planning, and ergodic search~\cite{PathSigtrajOpt, sirigiri2025diversifying}. Additionally, the signature kernel's mathematical properties are meaningful for robot trajectories and can provide a solution for the challenges mentioned earlier.

\begin{figure*}[h]
    \centering
    \vspace{-1.5cm}
    \includegraphics[width=\textwidth]{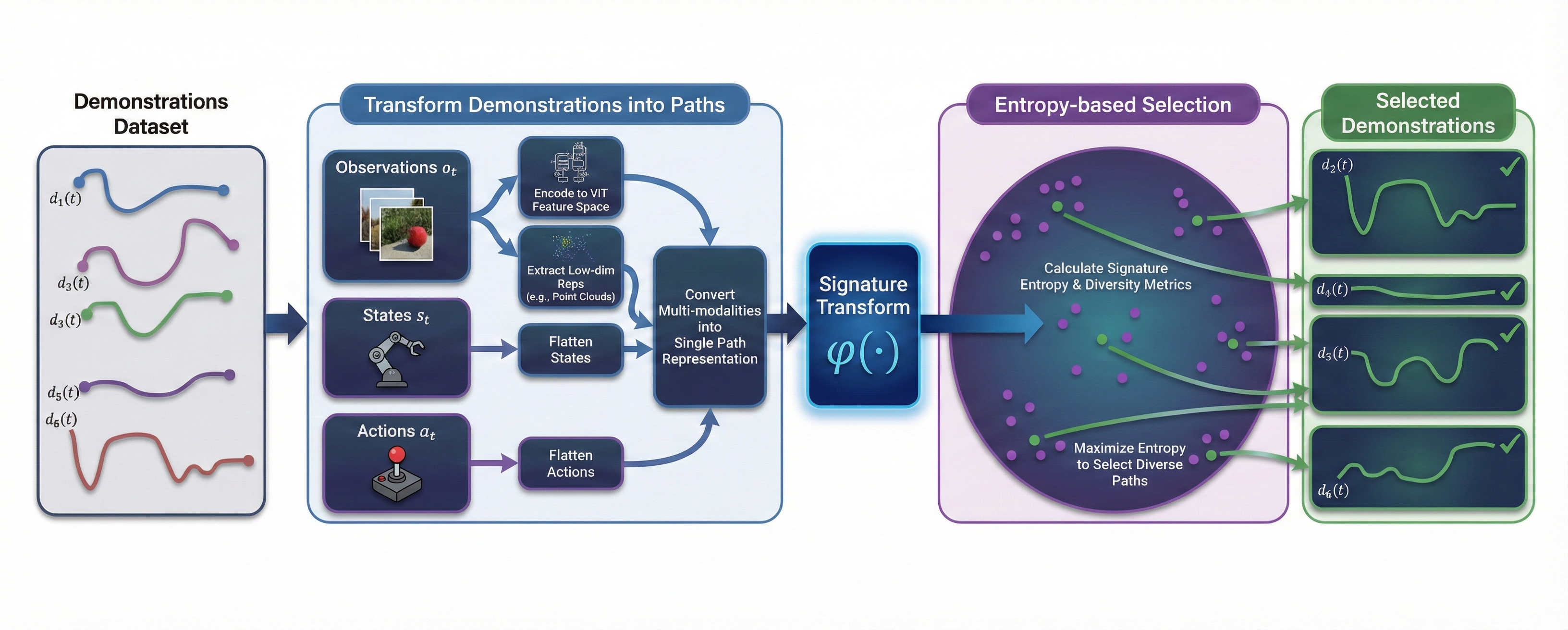}
    \vspace{-1.4cm}
    \caption{A graphical depiction of \textsc{FAKTUAL}. First, for each demonstration in the dataset, we embed the observations—specifically, the RGB images—into a ViT feature space, or alternatively extract low-dimensional representations of the objects of interest (e.g., point clouds) from the images. We then flatten any additional modalities present in the demonstrations, such as states and actions, if required, so that each demonstration can be represented as a paths or trajectories in space (detailed steps can be found in Appendix~\ref{App:demo_to_paths}). Using these path representations, we compute signature-based entropy and diversity metrics, which we then use to select a subset of the dataset that maximizes the entropy or diversity.}
    \label{fig:overview}
    \vspace{-0.3cm}
\end{figure*}

In addition to spanning a diverse and broad set of behaviors and states, a dataset must also provide sufficient \emph{coverage} around those behaviors and states, i.e., enough local “\emph{density}” of demonstrations for a reliable IL policy rather than merely observe isolated (and possibly outlying) trajectories. Demonstration \emph{quality} is likewise essential: a dataset can hinder learning if large portions of the data are suboptimal or inconsistent~\cite{robomimic}. In this sense, quality, diversity, density are complementary properties that jointly determine downstream policy performance.

Curating datasets using principled diversity metrics as a first-class signal remains largely unexplored, and existing pipelines often do not explicitly preserve diversity during selection. Moreover, when overall dataset quality is already high across demonstration trajectories, the importance of diversity and balanced coverage/density (i.e., a uniform coverage of states and behaviors) can become even more pronounced~\cite{parekh2026balancedbehaviorcloningimbalanced}. 
In fact, recent work has devoted considerable attention to data curation for improving the quality of robot imitation learning datasets~\cite{kuhar2023learning, DemInf, chen2025curating, demo_score, cupid}. 
 A largely successful approach is to retain only demonstrations judged to be beneficial according to a heuristic, \textit{task-agnostic} notion of quality, producing a smaller dataset curated offline~\cite{DemInf}. A limitation of several curation approaches (e.g.,~\citep{DemInf, demo_score, chen2025curating, cupid}) is that they typically rely on training an auxiliary model to compute heuristic scores, often incurring large upfront costs comparable to training the target policy itself. Moreover, due to the computational intensity of these approaches, they are often best applied to smaller, task-specific fine-tuning datasets (e.g., post-training) than to large-scale multi-task datasets (e.g., pre-training). However, if diversity can be computed cheaply and reliably, it can provide a practical dataset-level signal for curation without requiring an additional learned scoring model and could be applied to large datasets. Such a curation approach would be complementary to quality based curation, meaning the methods could be combined to create datasets with positive quality and diversity properties.

Furthermore, while diversity is not itself a direct measure of task success, it can nevertheless provide an explicit dataset-level signal that relates the composition of the training set to downstream performance. Indeed, we show later in the paper that demonstration diversity correlates well with policy success rate, suggesting that diversity can serve as a practical and informative criterion for curation. 

\textbf{Contributions:} Our main contributions are two-fold.  
(1) We compute Shannon entropy and von Neumann entropy on the eigenvalues of the signature kernel, providing a computationally cheap and principled way to measure the entropy and diversity of robotics datasets; our construction explicitly incorporates states, actions, videos and other modalities when defining this entropy-based diversity metric. 
(2) We propose a data curation strategy, \textsc{\textbf{FAKTUAL}} (\textsc{FA}st trajectory \textsc{K}ernel en\textsc{T}ropy c\textsc{U}ration for imitation \textsc{L}earning) grounded in this new notion of entropy and diversity, selecting a subset of the full dataset that maximizes entropy or diversity which can be used to remove redundant demonstrations making the dataset more uniform in the demonstrations (or trajectory) modes i.e., states and behaviors. Our data curation strategy is completely model-free, requires no information about the imitation learning policy being trained or its rollouts, and depends solely on the dataset itself. 
We show empirically that diversity-aware curation, though a heuristic, improves policy performance. 

\section{Related Work} \label{sec:related_work}

\paragraph{Dataset Diversity in Robotics}\citet{shi2025diversityneedscalablerobotic} consider a multi-task setting and show that:
(1) task diversity proves more critical than per-task demonstration quantity;
(2) multi-embodiment pre-training data is not strictly necessary for cross-embodiment transfer---models trained on high-quality single-embodiment data can transfer efficiently to different platforms and exhibit more favorable scaling behavior during fine-tuning than multi-embodiment pre-trained models; and
(3) expert diversity, stemming from individual operational preferences and stochastic variability in human demonstrations, can confound policy learning, with velocity multimodality (same action done slower/faster) being harmful to the performance and spatial multimodality being beneficial for performance. D3IL \cite{jia2024diversebehaviorsbenchmarkimitation} designs tasks with explicitly defined multiple successful behaviors and multiple human demonstrators, and evaluates IL algorithms using behavior entropy alongside success rate. They show that some advanced methods (especially diffusion-based transformer policies) can achieve both high diversity and high success. \citet{chi2024universalmanipulationinterfaceinthewild} collect demonstrations “in the wild” across diverse locations and object variants and shows that a single policy trained on this heterogeneous dataset generalizes to unseen environments and objects, including out-of-distribution settings (e.g., novel lighting, textures, surfaces).

\paragraph{Data Curation in Robotics}
Recent work has attempted to explicitly quantify the value of individual demonstrations for data curation~\cite{kuhar2023learning, DemInf, chen2025curating, demo_score}.
\citet{DemInf} estimates demonstration quality heuristic based on mutual information by training a VAE, and hence does not explicitly incorporate policy performance. However, this method can only filter existing data. \citet{chen2025curating} on the other hand train classifiers to distinguish successful from failed rollouts across policy checkpoints.
In contrast to these methods, \citet{cupid} directly quantify the causal influence of each demonstration on the policy’s expected return, yielding a signal that does not require access to both successes and failures.
DataMIL~\cite{dass2025datamil} applies datamodels to select from large multi-task datasets using an offline metric. SCIZOR~\cite{zhang2025scizorselfsupervisedapproachdata} is a self-supervised data curation approach to filter out suboptimal data, which hinder learning with undesirable actions, and redundant data, which dilute training with repetitive patterns. 

\paragraph{The signature (transform) and its corresponding kernel~\citep{lee2023signaturekernel, primerSK}} have proven to be highly effective in robotics and deep learning when data is naturally represented as a continuous paths or trajectories \citep{primerSK, deepsigtrans, Yang2022, signature_control}. Prior work has leveraged signature-based representations for diverse objectives, including motion planning in robotics where the signature kernel was used as similarity function between trajectories to produce diverse solutions for manipulation tasks \cite{PathSigtrajOpt}, human action recognition \cite{Yang2022}, and trajectory following or stabilization via signature methods \cite{signature_control}.
Signatures have also been applied to data-driven control, where trajectory information is encoded compactly to support learning and control design \cite{scampicchio2025rolesignaturetransformnonlinear}.

\citet{vendi_score} introduce the Vendi score which uses kernels to measure the diversity of a dataset. Subsequent works show that Vendi scores can be extended to measure information gain \cite{nguyen2025vendiinformationgainalternative}, and improve retrieval augmented generations \cite{rezaei2025vendi}. We extend the idea underlying the Vendi score to the robotic domain using Signature transforms and the Signature kernel. We also connect the Vendi score to Von Neumann and Shannon entropy under the Signature kernel. 

In Appendix \ref{app:diversity_in_ml} we have a more complete discussion of related works that focus on dataset diversity in non-robotics machine learning applications.
\section{Problem Statement} \label{sec:prob_statement}
The goal of this work is to characterize the relationship between demonstration dataset diversity and model performance in closed-loop robotic imitation learning, and leverages this relationship to motivate a diversity-based pruning method. Accordingly, we consider a finite-horizon Decision Process \(\langle \mathcal{S}, \mathcal{A}, \mathcal{T}, R, \rho_0, \mathcal{O}, \cdots \rangle\) with state space \(\mathcal{S}\), action space \(\mathcal{A}\), transition model \(\mathcal{T}\), reward function \(R\), initial state distribution \(\rho_0\), observation space \(\mathcal{O}\) and finite horizon \(T\).
We train a policy \(\pi_\theta\) by minimizing a behavior cloning (BC) objective, i.e.,
\[
\theta = \arg\min_{\theta'} \Big\{\mathcal{L}(\theta'; \mathcal{D}) := \frac{1}{|\mathcal{D}|} \sum_{d_i \in \mathcal{D}} \frac{1}{T_i} \sum_{(s, a) \in d_i} \ell(s, a; \pi_{\theta'})\Big\},
\]
using a dataset of demonstrations $\mathcal{D}=\{d_i, \ldots, d_n\}$.
Each demonstration $d_i = ((s_i(0), a_i(0), o_i(0)),  \ldots, (s_i(T_i), a_i(T_i), o_i(T_i)))$ is a state-action-observation trajectory in which the robot successfully completes the task.
We treat a trajectory \(\tau = (s(0), a(0), \ldots, s(T))\) as either a \emph{success} or a \emph{failure}, corresponding to binary returns \(R_i(t) = 1\) and \(R_i(t) = -1\), respectively for some \(t \in T\).

In imitation learning, we train the policy \(\pi_\theta\) to match the distribution of successful behaviors in \(\mathcal{D}\) (rather than directly maximizing its expected return \( J(\pi_\theta) := \mathbb{E}_{p(\tau \mid \pi_\theta)}[R(\tau)]\)).
As a consequence, the policy’s performance is tightly coupled to the demonstration data composition---and not solely to validation losses, model capacity, or bias–variance trade-offs~\citep{cupid}.
This dependence makes systematic performance improvement challenging. Recent work highlights that merely scaling demonstration collection can produce datasets with redundancy and behaviors that degrade policy performance, despite having \(R(d_i) = 1\) for all demonstrations \(d_i \in \mathcal{D}\)~\cite{belkhale2023data}.

In this paper, the problem we are focused on is finding a measure of dataset diversity or information content \(f(D)\in\mathbb{R}_{\geq 0}\) such that $f(\cdot)$ correlates positively with the success rate of a policy trained on $D \subseteq \mathcal{D}$, $\mathcal{J}(\pi_{D})$. Given such a measure $f$, we hope to use $f$ for finding data subsets $D \subseteq\mathcal{D}$ where $f(D) \approx f(\mathcal{D})$ and hence $\mathcal{J}(\pi_D) \approx \mathcal{J}(\pi_{\mathcal{D}})$.

\section{Background \& Preliminaries} \label{sec:background}
\subsection{Signature Transform and Signature Kernel} \label{sec:sig_trans_and_sig_kern_main}
Informally, the signature transform of a trajectory can be viewed as analogous to a Fourier series: it represents a path of variable length as a countably infinite sequence of features capturing its geometric and temporal structure.
We now introduce the signature transform of a path/trajectory and its corresponding kernel:

\begin{definition}\textbf{(Informal) Signature (transform) of a path~\citep{sigPDE}.}
\label{sigT}
Let \(x: I \to \mathbb{R}^d\) be a continuous path on an interval \(I \subset \mathbb{R}\). For any subinterval \([s,t]\subset I\), the \emph{signature} of \(x\) over \([s,t]\) is the sequence
\begin{multline}
\varphi(x)_{s,t} = \Bigl( 1,\; \int_{s<u_1<t}dx({u_1})\,,\; \dots \,, \;\\
\int_{s<u_1<\cdots<u_k<t}dx({u_1})\otimes\cdots\otimes dx({u_k}),\; \dots \Bigr),
\end{multline}
where \(\otimes\) denotes the (classical) tensor product. This family of iterated integrals encodes salient information about the path and is invariant to reparameterization (i.e., \(\varphi(x)=\varphi(x\circ\theta)\) for any reparameterization \(\theta\)).
\end{definition}

The path signature may be viewed as a canonical linear embedding of a multivariate path into an (infinite) series of iterated integrals. It is injective on the class of non-tree-like paths (in practice, non-tree-likeness is typically enforced by appending a monotone ``time'' coordinate). The signature also obeys a \emph{time-reversal duality}:
\(
\varphi(x)_{a,b}\;\otimes\;\varphi(\overleftarrow x)_{a,b}
\;=\;1,
\)
where \(1\) is the unit element and \(\overleftarrow x(t)=x(a+b-t)\).

\begin{definition}\textbf{(Informal) Signature Kernel.}
Let \( I = [u, u'] \) and \( J = [v, v'] \) be intervals, and let \( x \in \mathcal{C}^1(I,\mathbb R^d) \) and \( y \in \mathcal{C}^1(J,\mathbb R^d) \) be paths. The \emph{signature kernel} \( k^{sig}(x,y): I \times J \to \mathbb{R} \) is defined as follows
\[
k^{sig}_{s,t}(x,y) \;=\; \langle \varphi(x)_s,\, \varphi(y)_t \rangle,
\]
where \( \varphi(x)_s \) and \( \varphi(y)_t \) denote the signatures of \(x\) and \(y\) over \([u,s]\) and \([v,t]\), respectively.
\end{definition}

In practice, the signature kernel can be well approximated via the \emph{truncated signature transform} at level \(L \in \mathbb{N}\) (see Appendix~\ref{app:sigkern}),
\begin{multline}
\varphi_{s,t}^{L}(x)
= \Biggl( 1,\quad 
\int_{s < u_1 < t} dx(u_1),\quad \dots, \\
\int_{s < u_1 < \cdots < u_L < t} dx(u_1) \otimes \cdots \otimes dx(u_L)
\Biggr).
\end{multline}
The work of \citet{DPsig} and \citet{rffsig} provide efficient procedures for computing the truncated signature transform and its induced kernel using dynamic programming, and for approximating it with random Fourier features (see Appendix~\ref{app:rsfs}), respectively. Moreover, the untruncated signature transform and its kernel can be computed via a \emph{kernel trick} (see Appendix~\ref{app:sigkern}) introduced by \citet{sigPDE}.

\subsection{Entropy} \label{sec:entropy}
Entropy has long been used as a measure of randomness in a system across many fields, including physics, ecology, signal processing, information theory, and machine learning. It is often interpreted as the amount of information required to “describe’’ a system—the greater the randomness, the more information is needed to fully characterize it. 

In statistical physics, the (Gibbs) entropy~\cite{Gibbs_entropy} is defined as
\(
H_{\text{gibbs}} = -k_B \sum_{i} p_i \ln p_i,
\)
where \(p_i\) denotes the probability of each microscopic configuration \(x\) of a system that is consistent with the macroscopic constraints (such as total energy, volume, and number of particles), and \(k_B\) is Boltzmann’s constant. 
Shannon’s entropy~\citep{shannon_entropy} generalizes the Gibbs entropy, leading to the definition of information entropy as
\begin{equation} \label{eq:shannon_entropy}
    H_\text{shannon}(X) = -\sum_{x \in \mathcal{X}} p(x) \ln p(x),
\end{equation}
where \(X\) is a discrete random variable with support \(\mathcal{X}\) and \(p(x) = \mathbb{P}(X = x)\).
In general, the Shannon entropy measures the average uncertainty associated with the outcome of \(X\). In other words, the higher the Shannon entropy, the more uniform and spread out the distribution is.

\subsubsection{Von Neumann Entropy}
The \emph{von Neumann entropy}~\citep{Neumann_entropy, Neumann_entropy_translation} is the quantum analogue of Shannon entropy: it extends the notion of information-theoretic uncertainty from classical probability distributions to quantum states. Speaking very informally, the von Neumann entropy is defined as 
\begin{equation}
    H_\text{von neumann} = -\text{Tr}(\rho\ln\rho),
\end{equation}
where \(\rho\) is the density matrix, whose rows and columns correspond to the (quantum) states of the systems. The diagonal entries capture the probabilities of the system being in each state, while the off-diagonal entries reflect the “mixing” or coherence between different states. 
\section{Measuring Diversity of Datasets in Robotics} \label{sec:dataset_diversity}
The definition of Shannon entropy, given in Eq.~\eqref{eq:shannon_entropy}, requires a probability distribution (or a tractable method of computing the probability distribution), which is not feasible in many robotics contexts. Our goal is to establish a diversity metric that depends solely on the samples under evaluation. This necessitates the definition of a similarity function over the samples. We can use the signature kernel for such a similarity function.


The signature of a trajectory is invariant under reparameterization, i.e., it does not change when the same geometric path is traversed at a different speed.
This invariance effectively filters out the complicated, infinite-dimensional group of symmetries.
Moreover, the collection of linear functionals on the signature forms an algebra under pointwise multiplication and is rich enough to separate points~\citep{sigisDense}.
Consequently, by the \textit{Stone--Weierstrass} theorem~\cite{Rudin1976PMA}, for any compact set \(\mathbf{C}\) of continuous paths with bounded variation, these linear functionals are dense in the space of continuous real-valued functions on \(\mathbf{C}\)~\citep{sigisDense2}.
 Together, these results motivate the signature as a powerful feature map for representing demonstrations in robotics, once demonstrations are encoded as paths or trajectories. 

We convert each demonstration into a (multivariate) path by (i) embedding raw visual observations at each time step into a fixed-dimensional vector (e.g., a ViT embedding, or a low dimensional representation) and then (ii) concatenating it with the per-timestep robot state and action to form a trajectory \(X_i(t)\) over time. Alternatively, we can treat each modality (i.e., states, actions, observations etc.) as its own path and combine them via a convex combination of the corresponding signature kernels (see Appendix~\ref{App:demo_to_paths} for details).

We can now construct a form of entropy that relies only on the evaluated samples for measuring diversity. 

\begin{definition}[Signature Shannon Entropy] \label{def:sig_shannon_entropy}
    Let \(k^{sig}\) be the signature kernel, \(d_1, d_2, \cdots, d_n \in \mathcal{D}\) be a set of demos, and \(K^{sig} \in \mathbb{R}^{n \times n}\) denote the normalized kernel matrix produced by \(k^{sig}\) with entries \(K_{i,j}^{sig}= k^{sig}(d_i, d_j)\) and \(K_{i,i}^{sig}(d_i, d_i) = 1\). The \emph{signature Shannon entropy} is given by
    \begin{equation}
        H_\text{shannon}^{sig}(d_1, d_2, \cdots, d_n) = - \sum_{\lambda_i \in \Lambda} \lambda_i \ln \lambda_i,
    \end{equation}
    where \(\Lambda\) is the set of eigenvalues of \(K^{sig}/n\).
\end{definition}

The eigenvalues of \(K/n\) are nonnegative, since \(k^{sig}\) is positive semidefinite, and they sum to one, as each diagonal entry of \(K/n\) equals \(1/n\). Consequently, Shannon entropy is well-defined. 

\begin{definition}[Signature von Neumann Entropy] \label{def:sig_neumann_entropy}
    Consider the same notation used in Definition~\ref{def:sig_shannon_entropy}. The \emph{signature von Neumann entropy} is defined as 
    \begin{equation}
        H^{sig}_\text{von neumann}(d_1, d_2, \cdots, d_n) = -\text{Tr}\left(  \frac{K^{sig}}{n} \ln \frac{K^{sig}}{n} \right).
    \end{equation}
\end{definition}

Remarkably, it turns out that \(H^{sig}_\text{shannon} = H^{sig}_\text{von neumann}\), see Appendix~\ref{app:proofs} and \citep{vendi_score}. 
Exponentiating this Shannon/von Neumann entropy, defined using any positive definite kernel, gives the effective rank of the corresponding kernel matrix (see \citep{effective_rank}), which can be used to define the Vendi score, a diversity metric introduced by \citet{vendi_score}.

The entropy of a dataset reflects how uniformly random and well-distributed the samples are. Another useful metric is the volume spanned by the samples (i.e., the demonstrations), which can be measured using \[\Delta(d_1, d_2, \cdots, d_n):=\det(K^{sig}), \quad K^{sig}_{i,j} = k^{sig}(d_i,d_j).\]
\footnote{Technically, \(\Delta(d_1, \dots, d_n)\) gives the volume of the parallelotope, spanned by the feature embeddings of the samples in the RKHS induced by \(k\).} The kernel \(k(d_i,d_j)\) can be viewed as a similarity score between demonstrations \(d_i\) and \(d_j\). In particular, for a normalized kernel (i.e., with \(k(d_i,d_j)\le 1\) and \(k(d_i,d_i)=1\)),
\(
k(d_i,d_j)\to 1 \quad \text{as}\quad d_i \to d_j,
\qquad
k(d_i,d_j)\to 0 \quad \text{as}\quad \|d_i-d_j\| \to \infty.
\)
Given a set of trajectories, their diversity may be assessed by constructing the Gram matrix
\(
K \in \mathbb{R}^{n\times n}, \qquad K_{ij}=k(d_i,d_j), \qquad K_{ii}=1,
\)
and evaluating its determinant. In particular,
\(
\det(K)\to 0 
\quad\text{if}\quad 
d_i \to d_j \ \text{for any}\ i\neq j,
\qquad
\det(K)\to 1
\quad\text{as}\quad 
\|d_i-d_j\| \to \infty\ \text{for all}\ i\neq j,
\)
thus yielding another principled measure of diversity.

\section{Data Curation with Signature Diversity Metrics} \label{sec:dataset_curation}

\noindent We first introduce a set of definitions:

\begin{definition}[Signature-entropy–maximizing \(m\)-subset] \label{def:max_entropy_set}
    Consider the notation introduced in Definition~\ref{def:sig_shannon_entropy} and let Signature Entropy \(H := H^{\text{sig}}_{\text{shannon}} = H^{\text{sig}}_{\text{von neumann}}\). The \emph{Signature-entropy–maximizing \(m\)-subset} \(\left(\hat D^{\text{entropy}}_m\right)\) is defined as,
    \begin{equation}
        H\!\left(\hat D^{\text{entropy}}_m\right) \geq H(D_m), \quad \forall\, D_m \subseteq \mathcal{D},\ \lvert D_m \rvert = m.
    \end{equation}
\end{definition}

\begin{definition}[Signature-determinant–maximizing \(m\)-subset] \label{def:max_det_set}
    Consider the notation introduced in Definition~\ref{def:sig_shannon_entropy}, and then \emph{Signature-determinant–maximizing \(m\)-subset} \(\left(\hat D^{\text{det}}_m\right)\) is defined as,
    \begin{equation}
        \Delta\left({\hat D^{det}_m}\right) \geq \Delta\left({D_m}\right), \quad \forall\, D_m \subseteq \mathcal{D},\ \lvert D_m \rvert = m.
    \end{equation}
\end{definition}

In general, finding the signature-determinant–maximizing \(m\)-subset or signature-entropy–maximizing \(m\)-subset reduces to a well-known NP-Hard problem from combinatorial optimization called maximum coverage (a.k.a. max-\(k\)-cover) \citep{np_hardness, maximum_coverage}. Hence, one can only obtain an approximate solution in polynomial time. See Appendix~\ref{app:algos} for algorithms to approximate such a set. Typically these algorithms employ a greedy strategy by iteratively adding demonstrations to the set that maximally increase the entropy or determinant.

\textbf{Data Curation Strategy}. Our data curation strategy is straightforward: to curate a dataset of size \(m\), we select the (approximate) signature-entropy–maximizing \(m\)-subset of \(\mathcal{D}\). 
This choice is justified because, in practice, the signature entropy \(H\) exhibits diminishing returns as additional data points are added. Consequently, beyond a certain subset size, including more data points only marginally increases the entropy, which eventually approaches saturation and asymptotes (see Figure~\ref{fig:entropy_vs_num_demos}). Hence, a properly curated subset by maximizing the entropy can achieve nearly the same entropy (or the "information content") as the full dataset while using far fewer demonstrations while encouraging uniform coverage/density.

\begin{figure}[htbp]
    \centering
    \includegraphics[width=0.5\textwidth, keepaspectratio]{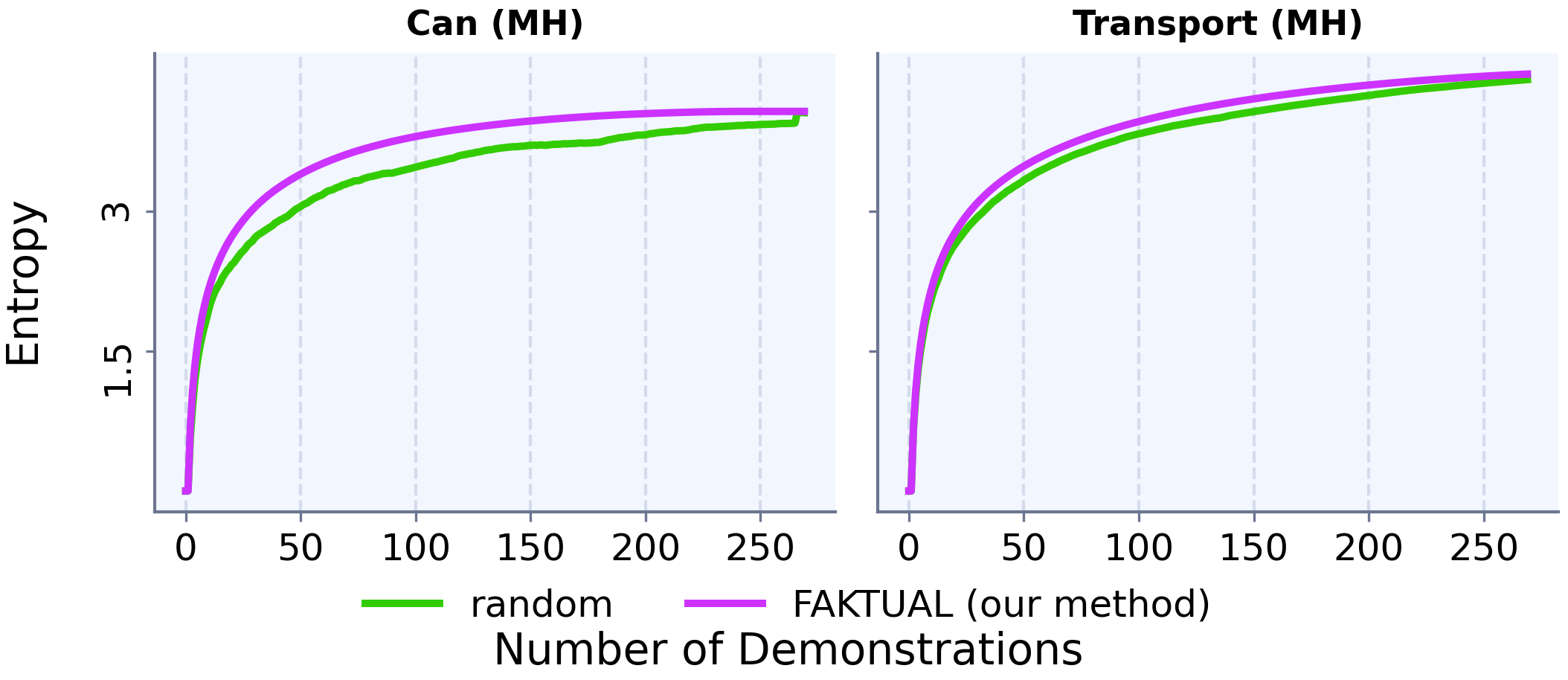}
    \caption{Entropy vs Number of Demonstrations. In the Can task, the entropy saturates quickly and approaches an asymptote. In contrast, the Transport task entropy does not saturate over the range shown, indicating that the Transport task is more diverse (at least as discriminated by the signature entropy) and each demonstration contributes almost equally to the entropy, hence we see a weaker separation between the random and \textsc{FAKTUAL} lines. Note that the exact value of entropy varies greatly with the chosen kernel bandwidth; hence for brevity we choose a bandwidth of \(1.0\) for this figure.}
    \label{fig:entropy_vs_num_demos}
\end{figure}
Selecting the signature-entropy–maximizing \(m\)-subset primarily promotes uniform randomness, but it may overlook certain extreme data points. In practice, it can be beneficial to include such points (except when they are extremely poor and should have been filtered out beforehand). Therefore, an objective that maximizes the volume of the subset, such as \(\det K\), can be useful.

To curate a dataset of size \(m\), we proceed as follows:
\begin{enumerate}
    \item Fix a proportion parameter \(p \in [0,1]\). In practice, a suitable range for \(p\) is between \(0.5\) and \(0.9\).
    \item Select the (approximate) signature-entropy–maximizing subset of size \(mp\):
    \(
        \hat D^{entropy}_{mp} \subseteq \mathcal{D}, \quad \lvert \hat D^{entropy}_{mp} \rvert = mp.
    \)
    \item From the remaining samples, select the (approximate) signature-determinant–maximizing subset of size \(m(1-p)\):
    \(
        \hat D^{det}_{m(1-p)} \subseteq \mathcal{D} \setminus D^{entropy}_{mp}, 
        \quad \lvert \hat D^{det}_{m(1-p)} \rvert = m(1-p).
    \)
    \item Finally, return the union of the sets \(D^{entropy}_{mp} \cup D^{det}_{m(1-p)}\).
\end{enumerate}
We provide details of the algorithms to compute the subset in Appendix~\ref{app:algos}.
\enlargethispage{2\baselineskip}
\section{Experiments and Results} \label{sec:results}
In this section, we present results to empirically demonstrate the effectiveness and applicability of the methods discussed above in a variety of simulated and real experiments (their exact setup can be found in Appendix~\ref{app:exps_deatails}). Specifically, we are interested in addressing: (1) The success rate versus the size of the curated dataset of size \(m\) for various tasks. (2) The success rate of our method compared to other baseline data curation strategies. (3) The effect of diversity on the success rate.

\begin{figure*}[h]
    \centering
    \includegraphics[width=1.0\textwidth, keepaspectratio]{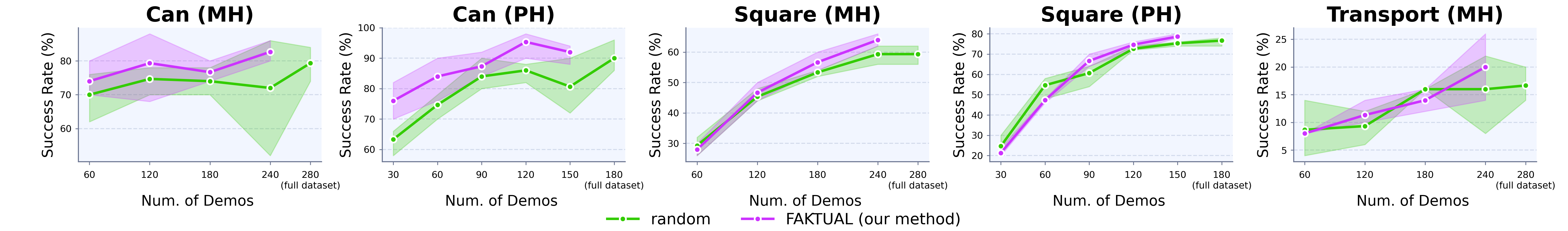}
    \caption{RoboMimic curation results. Success rates are computed over 50 rollouts for each of the 20 checkpoints, and we report the maximum across checkpoints. Results are averaged over three random seeds, and the error bars indicate the minimum and maximum values. On Can, \textsc{FAKTUAL} consistently outperforms random selection even at the lowest demo counts, and the gap is particularly large for PH, where performance improves sharply with very few demonstrations. For Square, the curves are largely indistinguishable at low demo counts; at higher counts, MH intervals still slightly overlap, while PH shows a small but consistent separation in favor of \textsc{FAKTUAL}. This PH advantage is plausibly due to PH comprising proficient-human, high-quality demonstrations, and to \textsc{FAKTUAL} operating as a diversity-based curation strategy rather than a data-quality–based curation method; (this is in line with the idea of uniform coverage/density discussed in Section~\ref{sec:intro}, when the data quality is high). Transport is more challenging: at small budgets FAKTUAL is comparable to or marginally below random pooling, but trends toward an advantage as the demonstration count increases. It is possible that, \textsc{FAKTUAL} will eventually outperform the random pruning strategy as more demonstrations are added and the entropy begins to saturate (see Figure~\ref{fig:entropy_vs_num_demos}).}
    \label{fig:robomimic_results}
\end{figure*}

\begin{figure*}[h]
    \centering
    \includegraphics[width=\textwidth]{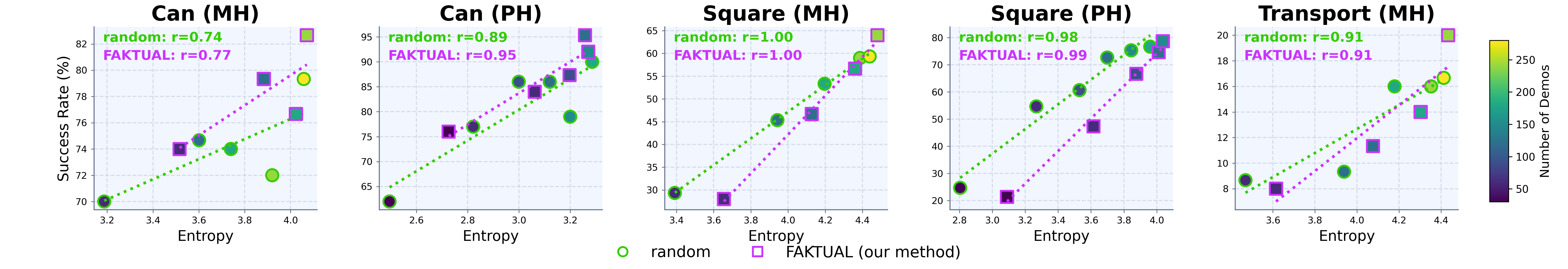}
    \caption{Signature entropy correlates with success. Success rate versus Signature entropy across RoboMimic tasks (Can, Square, Transport) and dataset variants (MH/PH). Points are different curated subsets colored by the number of demonstrations; dotted lines show linear fits for random selection (green circles) and \textsc{FAKTUAL} (magenta squares). Pearson correlation coefficients ($r$) are reported in each panel, indicating a strong positive association between entropy and downstream success.}
        \vspace{-0.2cm}
        \label{fig:entropy_success}
\end{figure*}

\subsection{Simulated Experiments}

We evaluate on simulation tasks from RoboMimic~\citep{robomimic} and MetaWorld~\citep{metaworld}: RoboMimic \emph{Can} and \emph{Square} (each with proficient-human (PH) and multi-human (MH) datasets), RoboMimic \emph{Transport} (MH), and MetaWorld \emph{Stick Push} (stick-push-v3), \emph{Door Open} (door-open-v3), and \emph{Shelf Place} (shelf-place-v3).
We use behavioral cloning \citep{bc} with an RNN network \citep{rnn1, rnn2, rnn3} and behavior cloning with a diffusion action head \citep{diffusion_policy} for the transport and square tasks.\footnote{We chose to use BC with an RNN network over the diffusion policy for the can task as the base policy (i.e., the policy with the full dataset) already achieves a 100\% success rate, leaving no further room for improvement.} For Metaworld tasks, we use SmolVLA~\citep{smolvla} (technically, we fine-tune it). The experimental results are summarized in Figure~\ref{fig:robomimic_results} and Figure~\ref{fig:metaworld_results}. For all Robomimic tasks considered in this section, we use the corresponding image-based datasets provided by Robomimic.

\begin{figure*}[h]
    \centering
    \includegraphics[width=0.8\linewidth, keepaspectratio]{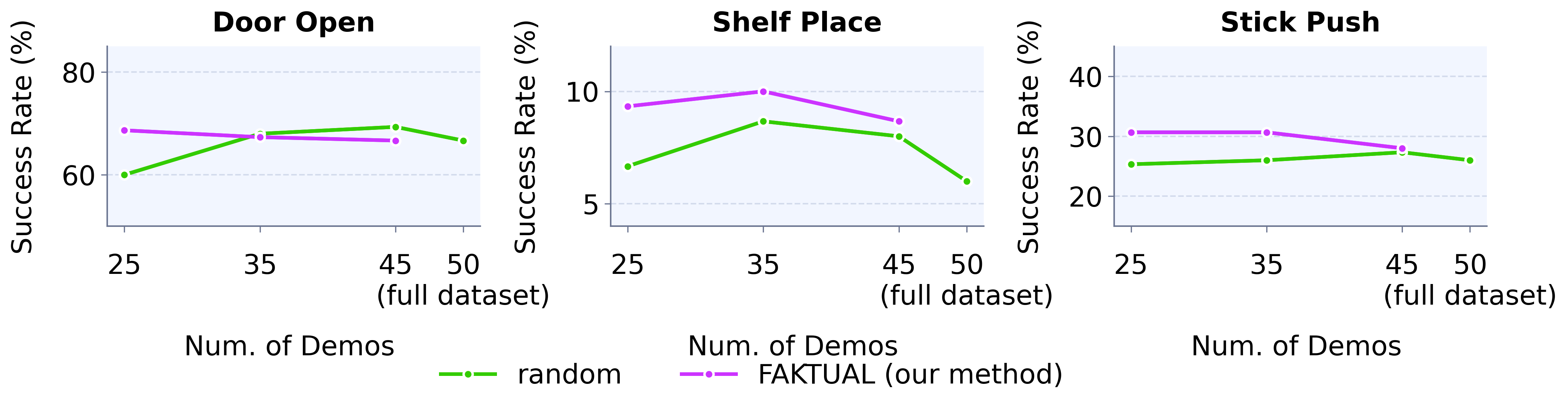}
    \caption{Metaworld curation results. Success rates are computed over 50 rollouts for each of the 20 checkpoints, and we report the maximum across checkpoints. Metaworld~\cite{metaworld} reports substantially different success rates across random seeds; accordingly, we plot only the mean success rate across 3 seeds here, and provide the full per-seed results in Table~\ref{tab:full_results_metaworld}. }
    \label{fig:metaworld_results}
    \vspace{-0.4cm}
\end{figure*}

\subsection{Real-world Experiments}
We evaluate on four real-world manipulation tasks using an SO-ARM101 6-DoF robotic arm kit~\citep{so101_github}: \emph{Drawer open}, where the robot opens a closed drawer; \emph{Mug drag}, where it grasps a hook and drags a mug to a goal by using the hook as an intermediate tool; \emph{Pick-and-place marker}, where it grasps a marker and places it into a pen holder; and \emph{Organize objects}, where it follows a simple condition: if a marker is on the table, it places it on a book; otherwise, if a charging brick is on the table, it opens the drawer, places the brick inside, and closes the drawer.
We once again use SmolVLA~\citep{smolvla} for these real-world tasks. The results are summarized in Figure~\ref{fig:real_results}.

\begin{figure*}[h]
    \centering
    \vspace{-0.1cm}
    \includegraphics[width=\textwidth]{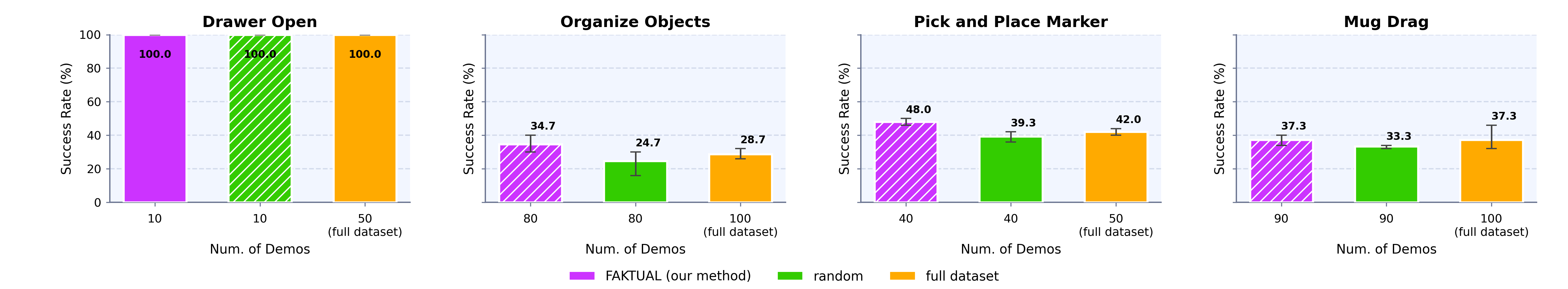}
    \caption{Real tasks curation results. Success rates are computed over 50 rollouts for each of the 20 checkpoints, and we report the maximum across checkpoints. Results are averaged over three random seeds, and the error bars indicate the minimum and maximum values. Hashed denotes the best-performing method (max success rate); in ties, the simplest method is selected. Two of the three real-world tasks display the curated subset outperforming the full dataset, indicating that additional demonstrations can be redundant and adding noise rather than being strictly beneficial.}
    \vspace{-0.2cm}
    \label{fig:real_results}
\end{figure*}


\subsection{Baselines}
We compare our method against the following baselines:
(1) DemInf~\citep{DemInf}, a data curation strategy that aims to estimate the relative quality of individual demonstrations in terms of both action diversity and predictability;
(2) Cupid~\citep{cupid}, a data curation method based on a novel influence function-theoretic formulation for imitation learning policies;
(3) Demo-SCORE~\citep{demo_score}, a curation strategy to self-curate based on online robot experience. More specifically, it trains and cross-validate a classifier to discern successful policy roll-outs from unsuccessful ones and use the classifier to filter demonstrations;
(4) Oracle, a method which approximates an upper bound on achievable performance by curating data using privileged access to ground-truth demonstration labels (e.g., indicators of demonstration quality, strategy robustness, or other relevant properties);
(5) Success Similarity, a custom baseline from \citet{cupid} that ranks demonstrations according to average state similarity to successful rollouts; and
(6) Random selection. 
To make fair baseline comparisons fair, we follow the exact experimental setup and implementation from ~\citet{cupid}. The results are summarized in Figure~\ref{fig:baseline_results}.
\footnote{An important detail is that, in this setting, we use Robomimic’s low-dimensional image representations rather than raw RGB images. Consequently, the reader may therefore observe differences in success rates compared to the previous experiments, and our method does not rely on ViT embeddings here (see Appendix~\ref{App:demo_to_paths} for more details).}

\begin{table}[h]
\centering
\label{tab:curation_methods}
\resizebox{\columnwidth}{!}{%
\begin{tabular}{lccc}
\toprule
Method & Auxiliary model? & Modal/Policy-free? & Online rollouts? \\
\midrule
CUPID & No & No & Yes \\
Demo-SCORE & Yes & No & Yes \\
DemInf & Yes & Yes & No \\
Success Similarity & No & No & Yes \\
\textsc{FAKTUAL} & No & Yes & No \\
\bottomrule
\end{tabular}%
}
\caption{Summary of key design characteristics of robot demonstration data curation strategies, categorized by auxiliary model (i.e., an additional learned model beyond the (imitation) policy itself) requirements, dependence on a learned policy, and use of online environment rollouts.}
\vspace{-0.4cm}
\end{table}

\begin{figure*}[h]
    \centering
    \vspace{-0.2cm}
    \includegraphics[width=0.9\textwidth]{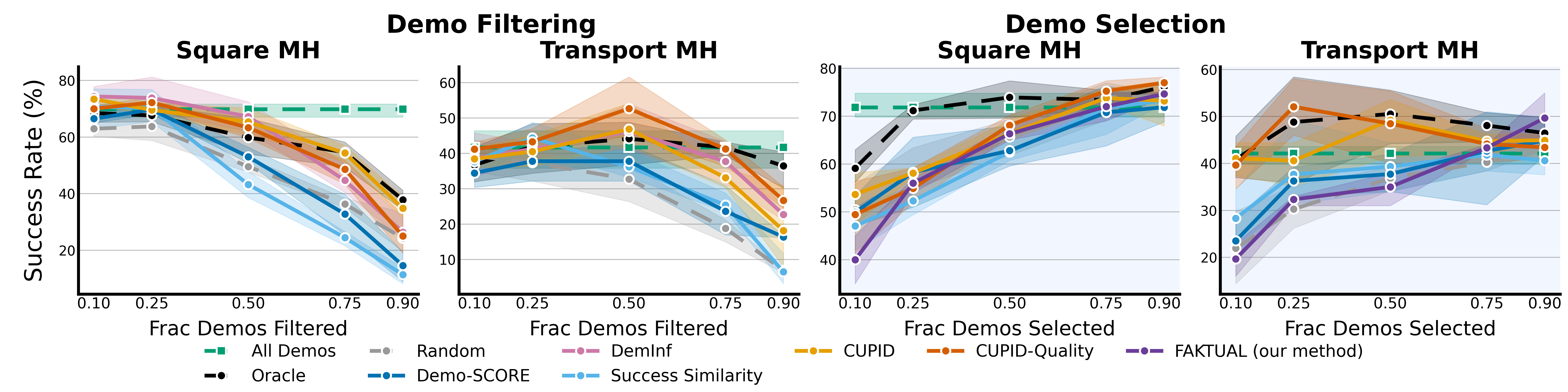}
    \caption{RoboMimic curation results with baselines. Success rates are computed over 50 rollouts for each of the 20 checkpoints, and we report the maximum across checkpoints. Results are averaged over three random seeds, and the error bars indicate the minimum and maximum values. Relative to random pruning, \textsc{FAKTUAL} is generally better, and it can get you near (and in some spots slightly above) the “All demos” reference—but it’s still below most of the stronger rollout-/quality-informed baselines, and it tends to sit closest to DemoSCORE’s performance band. In fact, we expect quality-based curation to outperform diversity-based curation when the dataset includes a mix of high- and low-quality demonstrations (e.g., MH datasets). The key trade is practical: FAKTUAL is model-free and lightweight, so it is a fast “good-enough” curation pass when we cannot afford policy/rollout-dependent scoring or training an auxiliary model.\protect\footnotemark}
    \label{fig:baseline_results}
    
    \vspace{-0.25cm}
\end{figure*}

\subsection{Key observations and discussion}

As shown in Figures \ref{fig:robomimic_results} and \ref{fig:baseline_results}, in the low-demonstration regime, diversity-maximizing curation can reduce success relative to random selection on harder tasks. This is not universal: on simpler tasks (e.g., Can), promoting diversity does not noticeably hurt performance. Overall, the demonstration budget at which diversity-aware curation overtakes random selection is task-dependent and typically higher for more difficult tasks. This suggests that discarding redundant or near-duplicate demonstrations becomes beneficial only once the demonstration budget is sufficiently large.
This indicates that the intuitive idea of \emph{density} (from Section~\ref{sec:intro}) is more important for "harder" tasks, while one can get away with lower density in easier tasks.

We further analyze the effect of diversity on success rate and Figure~\ref{fig:entropy_success} shows a strong positive association between signature entropy and downstream success: across tasks, higher-entropy sets consistently achieve higher success rates (Pearson $r=0.74$--$1.00$ for both random selection and \textsc{FAKTUAL}).

Now we discuss key observations  in particular tasks:

\paragraph{Pick-and-place marker}
During data collection for the pick-and-place marker task, the marker position was sampled at the start of each episode from a fixed set of five locations, with a bias toward positions on the left side of the robot. The position of the pen holder was independently sampled from another fixed set of five locations. Applying our method to select demonstrations tends to mitigate this imbalance by approximately equalizing the distribution over marker positions.

\paragraph{Drawer open}
Five distinct lighting conditions were used while collecting demonstrations for the drawer-opening task. Our method did not successfully distinguish between these lighting conditions, primarily because it operates on ViT-based image embeddings or low-dimensional observations (such as point clouds) of the objects of interest, which may not capture global illumination differences sufficiently.

\paragraph{Organize objects}
Because this task is goal-conditioned, our method did not systematically favor trajectories ending in one particular goal configuration over those ending in others.

\section{Ablations} \label{sec:ablations}

\paragraph{Number of signature levels vs.\ success rate}
We found that the optimal number of signature levels is dependent on the nature of the dataset, and hence on the shape of the trajectories. In Figure \ref{fig:siglevels_results} (Appendix \ref{app:add_results}) we see two patterns emerge when increasing the number of signature levels: (1) Success rate of the curated dataset improves with size \(m\) (right subplot). (2) Success rate of the curated dataset of size \(m\) improves up to a point, after which it begins to decrease (left subplot).

One possible explanation for this behavior is that, at a very high and intuitive level, the signature transform behaves similarly to a (discrete) Fourier transform: in the Fourier case, the first few coefficients correspond to low frequencies and the higher-order coefficients capture high frequencies, whereas for the signature transform the different levels encode increasingly fine geometric information about the trajectory’s shape. Thus, in datasets where the high-level, coarse shape of the trajectories carries most of the task-relevant information, a truncated signature transform is particularly useful, whereas in settings where this is not the case the full signature becomes important. In some datasets, the detailed shape of the trajectory may not have any meaningful information (e.g., when trajectories are very rough or noisy), in which case the high-level shape is more informative. Throughout this ablation study we use the random Fourier features method over to dynamic programming methods for computing the truncated signature kernel.

\paragraph{Value of \(p\) vs.\ the Success Rate} Similar to signature levels, we found that the optimal value for \(p\) is highly dependent on the nature of the dataset. However, choosing a value between \(0.2 \text{ and } 0.5\) will result in some improvement over the base policy. In fact, setting \(p=0\) performs well in practice. This is because maximizing either the entropy or the determinant maximizes the informal and intuitive notion of diversity.

\paragraph{Signature Kernel vs Global Alignment Kernel} We benchmarked the signature kernel with the global alignment kernel~\citep{global_alignments}, another effective kernel for sequential data. We found that curating using the global alignment kernel yields the same performance as random selection. 

 \footnotetext{{Note that DemInf can only perform Demo-filtering, while \textsc{FAKTUAL} can only perform Demo-Selection, as trying to perform filtering with \textsc{FAKTUAL} would correspond to finding a set of demos with the lowest diversity and pruning it out, which is ill defined as it would just prune out a "local cluster" of demos.}}
\section{Conclusion and Limitations}\label{sec:limits}

We introduce signature-based entropy(s) and diversity metrics over demonstrations and used them to develop \textsc{FAKTUAL}, a fast, model-free data curation method for robot imitation learning. Across simulated and real-world benchmarks, diversity-aware curation improved or matched the performance of the full dataset while promoting diversity/entropy ensures the dataset is balanced and follows a more uniform distribution. However, our work has several limitations we discuss below. 

\textbf{Data quality}. Our method, unlike other data curation strategies such as \citet{demo_score, DemInf, cupid}, does not guarantee the selection of only high-quality trajectories. Future work could combine these approaches

\textbf{Datasets used in experiments}. Selecting the most diverse subset can, in some cases, be detrimental, for instance, when a subset of the dataset contains trajectories that are radically different but harmful for learning. None of the datasets used in our experiments contain such adversarial or pathological trajectories, containing only successful demonstrations. In settings where they are present, methods explicitly designed to account for data quality, such as \citet{demo_score, DemInf, cupid}, are likely to be more effective.

\textbf{Demo-to-performance relation}. Our approach does not establish strong causal relationships between individual training examples and the resulting policy behavior, a connection that \citet{cupid} aim to model directly. 
Hence our method remains susceptible to misattributing the true causes of policy success or failure to the underlying training data.

\textbf{Model choice}. We assume the imitation model/policy is good enough to learn from multimodal demonstrations, depending on factors such as the number of parameters, the architecture, the specific task, etc. If the model is not capable of capturing the multimodality in the data, then promoting diversity may actually be harmful; \citet{jia2024diversebehaviorsbenchmarkimitation}) reports cases where high diversity reduces success when the model is not strong enough. In such cases, random pruning outperforms FAKTUAL.

Despite these limitations, our results highlight trajectory-kernel entropy as a practical tool for measuring and promoting dataset diversity, and motivate a number of directions for future work. For example, extensions on combining diversity- and quality-based curation at larger scales can complement our method to operate in datasets of variable quality. 



\newpage
\bibliographystyle{plainnat}
\bibliography{references}


\appendices
\listofappendices
\appsection{Demos to  Paths} \label{App:demo_to_paths}

For simplicity, we assume a single camera and a single robot; however, our method extends straightforwardly to settings with multiple cameras and robots.

\subsection{Method 1: Flatten and Stacked}
A demonstration \(d_i \in \mathcal{D}\) typically consists of various modalities such as states, actions, the robot’s perception data (e.g., RGB images from cameras, depth sensor measurements, etc.), rewards, and any additional auxiliary data recorded at each time step \(t \in T_i\), where \(T_i\) denotes the length of the demonstration. We denote the state at time \(t\) in \(d_i\) by \(s_i(t)\), the action by \(a_i(t)\), the RGB image by \(I^\text{RGB}_i(t)\), and the reward by \(R_i(t)\).

To convert a demonstration into a trajectory (or path), at each time step \(t \in T_i\) we first embed the RGB image into a fixed dimensional ViT~\citep{ViT} embedding \(e_i(t)\). Alternatively, one can also extract low-dimensional representation (such as point clouds, ground truth pose(s) etc.) of the objects of interest from the RGB images, we will denote this with \(l_i(t)\) (if \(l_i(t)\) is not vector, one can simply flatten it into a vector).
\footnote{All experiments in Section~\ref{sec:results} used the ViT embedding method except the results presented in Figure~\ref{fig:baseline_results}. We provide a limited evaluation of the robomimic's lift task using the low-dimensional representation in Appendix~\ref{app:add_results}.} 
Let \(o_i(t) := e_i(t) \text{ or } l_i(t)\) denote the "observation" vector at time \(t \in T_i\). We then construct a feature vector by flattening and concatenating all components:
\[
    X_i(t) = 
    \begin{bmatrix}
        s_i(t) \\
        a_i(t) \\
        o_i(t) \\
        \vdots \\
        R_i(t)
    \end{bmatrix}.
\]
Now one can use the signature kernel over the trajectories \(X_i\).

\subsection{Method 2: Convex Combination}

The reader may observe that each data type (or modality) (e.g., states (\(s_i(t)\)), actions (\(a_i(t)\)), observations (\(o_i(t)\)), and rewards) forms a path or trajectory in its own right. Consequently, for each such path we can compute a signature kernel, and then take a convex combination of these kernels. Mathematically, let
\[
    k_{\text{state}}^\text{sig}, \; k_{\text{action}}^\text{sig}, \; k_{\text{observations}}^\text{sig}, \; \cdots, \; k_{\text{reward}}^\text{sig}
\]
denote the signature kernels corresponding to the state, action, observations, reward trajectories, etc.. We then define the combined kernel as
\begin{multline*}
    \hat k^\text{sig} = \alpha_{\text{state}} \, k_{\text{state}}^\text{sig} 
      + \alpha_{\text{action}} \, k_{\text{action}}^\text{sig} \\
      + \alpha_{\text{observations}} \, k_{\text{observations}}^\text{sig}
      + \cdots
      + \alpha_{\text{reward}} \, k_{\text{reward}}^\text{sig},
\end{multline*}

where the coefficients satisfy
\begin{multline*}
    \alpha_{\text{state}}, \alpha_{\text{action}}, \alpha_{\text{observations}}, \cdots,  \alpha_{\text{reward}} \geq 0, 
    \quad \text{and} \quad \\
    \alpha_{\text{state}} + \alpha_{\text{action}} + \alpha_{\text{observations}} + \cdots + \alpha_{\text{reward}} = 1.
\end{multline*}

This convex combination allows us to integrate information from multiple modalities in a principled way, while preserving the kernel’s validity.
\footnote{Although in our experiments this method did not yield better outcomes than the previous approach, it may scale more effectively with higher-dimensional settings (e.g., longer horizons) and additionally allows for individual kernel hyperparameter control for each modality.} Furthermore, \(\hat k^{sig}\) will inherit many useful properties of the signature kernel (e.g. universality).  

\subsection{Method 3: Low dim convex combination} \label{sec:method3}

Alternatively, one can use a simpler convex combination using the signature kernel and the RBF kernel defined as
\begin{equation}
k^{\mathrm{RBF}}(x, x') \;=\; \exp\!\biggl(-\frac{\lVert x - x'\rVert^2}{2h^2}\biggr).
\end{equation}
We use the first frame from each observation video, denoted \(I_i^{\mathrm{RGB}}(0)\), and compute either a ViT embedding or a low-dimensional observation from \(I_i^{\mathrm{RGB}}(0)\). Let \(p_i(t)\) denote the end-effector path of demonstration \(d_i\) at time \(t\). Using these quantities, we can construct a kernel matrix as follows:

\begin{equation*}
    \hat K^{mix} = \alpha_{1} K^{RBF} + \alpha_2 K^{sig}, \quad \sum \alpha_i = 1
\end{equation*}

where \(K_{i,j}^{RBF} = k^{rbf}(o_i(0), o_j(0))\) and \(K_{i,j}^{sig}= k^{sig}(p_t(t), p_j(t))\).

\subsection{Implementational practices}
\paragraph{Time coordinate} It is important to note that, in practice, it can sometimes be useful to append a monotonic ``time'' coordinate to each path (e.g., \(s'_i(t) := (t, s_i(t))\)). This adjustment is necessary because the signature transform is reparameterization invariant.  

For example, consider the following constant paths:
\[
p_1(t) = \delta_a, \quad p_2(t) = \delta_b, 
\qquad 0 \leq t \leq 1, \; a,b \in [0,1], \; a \neq b,
\]
for which we obtain
\[
\varphi(p_1) = \varphi(p_2).
\]
However, after augmenting with time, we have
\begin{multline*}
p'_1(t) = (t, \delta_a), \quad p'_2(t) = (t, \delta_b), 
\qquad \\ 0 \leq t \leq 1, \; a,b \in [0,1], \; a \neq b,
\end{multline*}
and in this case
\[
\varphi(p'_1) \neq \varphi(p'_2).
\]

This modification can sometimes be detrimental, and its impact is highly task- and dataset-dependent; in scenarios where reparameterization invariance is important, we do not recommend appending the time coordinate.

\paragraph{Sub-sampling}
It is not always necessary to use the states, actions, and observations, etc. at all time steps \(t \in T\); for long-horizon tasks, it can be beneficial to sample them at regular intervals, especially to reduce the overall computational load.

\paragraph{Distractors}
It is important to note that when using the method described in Subsection~\ref{sec:method3} or applying sub-sampling, distractors may not be captured adequately. This issue will particularly be pronounced when a distractor appears and disappears within a time interval shorter than the sampling period under sub-sampling, or when it only appears later in the trajectory while the method in Subsection~\ref{sec:method3} relies on earlier observations.

 \paragraph{Camera selection}
If the dataset includes wrist-mounted cameras, we recommend not using them with the methods described above, as they can introduce substantial noise and make all demonstrations look diverse. In fact, we strongly advise using only a single camera that reliably captures the entire scene.

\appsection{Signature Kernel} \label{app:sigkern}

\subsection{Signature Transform}

\begin{definition} \textbf{Signature transform of a path (trajectory)~\citep{sigPDE}.}
\label{sigT_formal}
Let \(V\) be a Banach space and consider the tensor algebra of formal power series
\[
T((V)) := \prod_{k\geq 0} V^{\otimes k},
\]
equipped with the natural operations induced by the tensor product and with unit element \((1,0,0,\ldots)\).

Let \(x:I\to V\) be a continuous path on a compact interval \(I\subset\mathbb{R}\) with finite \(p\)-variation for some \(p<2\).
For any subinterval \([s,t]\subset I\), the \emph{signature} of \(x\) over \([s,t]\) is the element of \(T((V))\) defined by the sequence of iterated integrals
\begin{multline}
\varphi(x)_{s,t}
\;=\;
\Bigl(
1,\;
\int_{s<u_1<t} \mathrm{d}x(u_1),\;
\ldots,\; \\
\int_{s<u_1<\cdots<u_k<t} \mathrm{d}x(u_1)\otimes\cdots\otimes \mathrm{d}x(u_k),\;
\ldots
\Bigr).
\end{multline}
The signature provides a feature representation of the path and is invariant to reparameterization, i.e.,
\(\varphi(x)=\varphi(x\circ\theta)\) for any increasing reparameterization \(\theta\).
\end{definition}

\subsubsection{Properties of the Signature Transform}

\paragraph{Injectivity}
The path signature can be viewed as a canonical feature map that embeds a multivariate path into an (infinite) sequence of iterated integrals.
It is injective on the class of non--tree-like paths: two such paths cannot share the same full signature.
In practice, non--tree-likeness is typically enforced by augmenting the path with a monotone ``time'' coordinate.

\paragraph{Time-reversal duality}
The signature also satisfies a \emph{time-reversal duality} relation:
\[
\varphi(x)_{a,b}\;\otimes\;\varphi(\overleftarrow{x})_{a,b} \;=\; 1,
\]
where \(1\) denotes the unit element and \(\overleftarrow{x}(t) := x(a+b-t)\).

\paragraph{Chen's identity}
A central property of signatures is their multiplicative behavior under concatenation.
If a path is decomposed into successive segments, then the signature of the entire path can be obtained by tensor-multiplying the signatures of the segments.
This is formalized by Chen's identity.

\begin{theorem}[Chen's identity]
Consider the same setting as Definition~\ref{sigT_formal}, then for any \(a\le s\le u\le t\le b\),
\[
\varphi(x)_{s,t}
\;=\;
\varphi(x)_{s,u}
\;\otimes\;
\varphi(x)_{u,t}
\quad\in T((V)).
\]
Alternatively, for each level \(k\ge 0\),
\[
\varphi(x)_{s,t}^{k}
\;=\;
\sum_{i+j=k}
\varphi(x)_{s,u}^{i}
\;\otimes\;
\varphi(x)_{u,t}^{j},
\]
where \(\varphi(x)_{a,b}^{\ell}\in V^{\otimes \ell}\) is the \(\ell\)-th iterated integral.
\end{theorem}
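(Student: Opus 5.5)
To prove Chen's identity, I will work level by level, which is enough because multiplication in $T((V))$ is defined componentwise through the graded sum $\sum_{i+j=k}$. Fix $k\ge 0$ and $a\le s\le u\le t\le b$. The $k$-th level of $\varphi(x)_{s,t}$ is the iterated integral over the simplex
\[
\Delta^k_{s,t}=\{(u_1,\ldots,u_k): s<u_1<\cdots<u_k<t\}.
\]
The first step is to decompose this simplex according to how many of the ordered times lie below $u$. Up to a set of measure zero where some $u_j=u$, $\Delta^k_{s,t}$ is the disjoint union over $i=0,\dots,k$ of the sets
\[
\{s<u_1<\cdots<u_i<u<u_{i+1}<\cdots<u_k<t\}.
\]
Each such set is the product $\Delta^i_{s,u}\times\Delta^{k-i}_{u,t}$.

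The second step is a Fubini-type factorisation. On each product piece, the integrand $\mathrm{d}x(u_1)\otimes\cdots\otimes \mathrm{d}x(u_k)$ splits as
\[
\bigl(\mathrm{d}x(u_1)\otimes\cdots\otimes\mathrm{d}x(u_i)\bigr)\otimes\bigl(\mathrm{d}x(u_{i+1})\otimes\cdots\otimes\mathrm{d}x(u_k)\bigr).
\]
Bilinearity of $\otimes$ then lets the integral over the product equal $\varphi(x)^i_{s,u}\otimes\varphi(x)^{k-i}_{u,t}$. Summing over $i$ gives the levelwise formula, and hence the identity in $T((V))$.

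The main obstacle is justifying these manipulations for paths of finite $p$-variation with $p<2$, where the integrals are Young integrals rather than Lebesgue–Stieltjes ones. I would handle this in three moves:
\begin{enumerate}
\item First prove the identity for piecewise linear, or more generally bounded-variation, paths. There $\mathrm{d}x$ is a genuine vector measure, the diagonal sets have measure zero (on them the integrand is a product measure of a null set when $x$ is continuous), and Fubini applies directly.
\item Approximate a general $x$ by piecewise linear interpolations $x^{(n)}$ that converge in $q$-variation for some $p<q<2$.
\item Invoke continuity of the signature map at each fixed level in the $q$-variation topology (Young's estimate) to pass to the limit on both sides. Both sides are continuous in $x$ because $\otimes$ is continuous on the finite-level tensor spaces.
\end{enumerate}

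An alternative route avoids the approximation step. Define $A_t=\varphi(x)_{s,t}$ and $B_t=\varphi(x)_{s,u}\otimes\varphi(x)_{u,t}$ for $t\ge u$. Both satisfy the linear Young differential equation $\mathrm{d}Y_t=Y_t\otimes\mathrm{d}x_t$ with the same initial value at $t=u$. Uniqueness of solutions to linear Young ODEs then gives $A=B$. This uses the recursion
\[
\varphi^{k}_{s,t}=\int_s^t \varphi^{k-1}_{s,r}\otimes \mathrm{d}x(r),
\]
which holds by definition. I would present the inductive levelwise version of this argument: assume the identity holds at level $k-1$, then split $\int_s^t=\int_s^u+\int_u^t$ in the recursion and apply the inductive hypothesis inside $\int_u^t$. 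This is the cleanest proof and needs only additivity of the Young integral over intervals.
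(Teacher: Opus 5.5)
The paper gives no proof of Chen's identity. It is listed in Appendix~\ref{app:sigkern} among standard properties of the signature, next to injectivity and time-reversal duality, and is taken from the rough-path literature without argument. There is therefore no in-paper route to compare with.

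Judged on its own, your proposal is correct. The levelwise induction you settle on is the standard textbook proof. Use the recursion $\varphi^{k}_{s,t}=\int_s^t\varphi^{k-1}_{s,r}\otimes \mathrm{d}x(r)$, split the integral at $u$, and insert the level-$(k-1)$ identity for every $r\in[u,t]$:
\[
\varphi^{k}_{s,t}=\varphi^{k}_{s,u}+\sum_{i+j=k-1}\varphi^{i}_{s,u}\otimes\int_u^t\varphi^{j}_{u,r}\otimes\mathrm{d}x(r)=\varphi^{k}_{s,u}\otimes\varphi^{0}_{u,t}+\sum_{i=0}^{k-1}\varphi^{i}_{s,u}\otimes\varphi^{k-i}_{u,t}.
\]
This step uses more than additivity of the Young integral over intervals. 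It also pulls the constant factor $\varphi^{i}_{s,u}$ out of the integral. That is legitimate because $Y\mapsto c\otimes Y$ is bounded and linear under an admissible tensor norm, so it commutes with Riemann sums. You should state this explicitly.

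You also do not need uniqueness of linear Young ODEs in $T((V))$. The levelwise system is triangular, so the induction itself serves as the uniqueness argument. This also avoids the infinite-dimensional subtleties of the untruncated algebra.

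Your first route is also valid: simplex decomposition plus Fubini, then piecewise-linear approximation in $q$-variation for $p<q<2$, then continuity of each level. It makes the origin of the sum $\sum_{i+j=k}$ transparent and extends naturally to other simplex-splitting identities. It is heavier, though. In a general Banach space $V$ the product ``measure'' $\mathrm{d}x(u_1)\otimes\cdots\otimes\mathrm{d}x(u_k)$ needs care, so it is safest to carry out step 1 only for piecewise linear paths, where everything is a finite sum of Riemann integrals. Step 2 relies on the interpolation fact that uniform convergence together with uniformly bounded $p$-variation gives convergence in $q$-variation for $q>p$; you should cite it.

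In short, the induction works directly with Young integrals and needs no approximation. The decomposition is more intuitive but more technical to make rigorous.
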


In Hopf-algebraic terms, \(\varphi(x)\) is \emph{group-like} in the tensor algebra, i.e., its coproduct satisfies
\[
\Delta\,\varphi(x) \;=\; \varphi(x)\otimes\varphi(x).
\]

\paragraph{Factorial decay of signature terms}
For a path \(x\) of finite \(p\)-variation with \(p>1\), the norms of the level-\(k\) iterated integrals exhibit factorial decay~\cite{sigisDense2}:
\[
\left\|
\int_{s<u_1<\cdots<u_k<t}
\mathrm{d}x(u_1)\otimes\cdots\otimes \mathrm{d}x(u_k)
\right\|_{V^{\otimes k}}
\;\le\;
\frac{\|x\|_{p,[s,t]}^{k}}{k!},
\]
where \(\|x\|_{p,[s,t]}\) denotes the \(p\)-variation of \(x\) over \([s,t]\).
Consequently, the signature (and signature kernel) can often be well-approximated by truncation at a finite level \(L\in\mathbb{N}\).
The \emph{truncated signature} of level \(L\) is
\begin{multline}
\varphi^{L}(x)_{s,t}
\;=\;
\Biggl(
1,\;
\int_{s<u_1<t}\mathrm{d}x(u_1),\;
\ldots,\; \\
\int_{s<u_1<\cdots<u_L<t}
\mathrm{d}x(u_1)\otimes\cdots\otimes \mathrm{d}x(u_L)
\Biggr)
\\
\in
\bigoplus_{k=0}^{L} V^{\otimes k},
\end{multline}
where \(\oplus\) denotes the direct sum.

\subsection{Signature Kernel}
We now introduce the signature kernel, which uses the signature transform as a feature map to define a positive-definite kernel on paths.

\begin{definition}\textbf{Signature kernel.} \label{def:formal_sig_kern}
Consider the same setting as Definition~\ref{def:formal_sig_kern}, then the \emph{signature kernel} \(k^{\mathrm{sig}}(x,y)\colon I\times J\to\mathbb{R}\) is defined as
\begin{equation}
k^{\mathrm{sig}}_{s,t}(x,y)
\;=\;
\bigl\langle \varphi(x)_{u,s},\, \varphi(y)_{v,t} \bigr\rangle,
\end{equation}
where \(\varphi(x)_{u,s}\) and \(\varphi(y)_{v,t}\) denote the signatures of \(x\) and \(y\) over the subintervals \([u,s]\) and \([v,t]\), respectively, and \(\langle\cdot,\cdot\rangle\) denotes the canonical inner product on the (truncated or completed) tensor algebra.
\end{definition}

\subsubsection*{Computing the untruncated signature kernel}

Leveraging the \emph{kernel trick} of~\citep{sigPDE}, the (untruncated) signature kernel can be evaluated efficiently by solving a simple PDE.

\begin{theorem}\label{maintheorem_formal}
Let \(I=[u,u']\) and \(J=[v,v']\) be compact intervals, and let \(x\in\mathcal{C}^1(I,V)\) and \(y\in\mathcal{C}^1(J,V)\).
Then the signature kernel \(k^{\mathrm{sig}}(x,y)\) satisfies the linear second-order hyperbolic partial differential equation
\[
\frac{\partial^2}{\partial s\,\partial t}\,k^{\mathrm{sig}}_{s,t}(x,y)
\;=\;
\bigl\langle \dot{x}(s),\,\dot{y}(t)\bigr\rangle_{V}\,
k^{\mathrm{sig}}_{s,t}(x,y),
\]
with boundary conditions
\[
k^{\mathrm{sig}}_{u,t}(x,y)=1 \quad \forall\, t\in J,
\qquad
k^{\mathrm{sig}}_{s,v}(x,y)=1 \quad \forall\, s\in I.
\]
\end{theorem}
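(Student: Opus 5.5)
The plan is to prove the statement by expanding the kernel level by level and differentiating the resulting series term by term. Write \(\varphi(x)_{u,s}=\sum_{k\ge0} S^k_s(x)\) with \(S^k_s(x)\in V^{\otimes k}\) the \(k\)-th iterated integral over \([u,s]\), and similarly \(S^k_t(y)\) over \([v,t]\). Since the levels of \(T((V))\) are mutually orthogonal under the canonical inner product,
\[
k^{\mathrm{sig}}_{s,t}(x,y)=\sum_{k\ge0}\bigl\langle S^k_s(x),S^k_t(y)\bigr\rangle_{V^{\otimes k}} .
\]
By the factorial decay bound stated earlier (here \(x,y\in\mathcal C^1\) have bounded variation), each term is bounded by \(\|x\|_1^k\|y\|_1^k/(k!)^2\). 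So the series converges absolutely and uniformly on \(I\times J\), and the same holds after mixed differentiation.

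\textbf{Boundary conditions.} At \(s=u\) the interval \([u,u]\) is degenerate, so every iterated integral of level \(k\ge1\) vanishes and \(\varphi(x)_{u,u}=(1,0,0,\ldots)\). Hence \(k^{\mathrm{sig}}_{u,t}=\langle 1,1\rangle=1\) for every \(t\in J\). The same argument with \(t=v\) gives \(k^{\mathrm{sig}}_{s,v}=1\).

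\textbf{Differential identity.} By the definition of iterated integrals (equivalently, Chen's identity applied to an infinitesimal final segment), for a \(\mathcal C^1\) path
\[
S^k_s(x)=\int_u^s S^{k-1}_r(x)\otimes \dot x(r)\,dr,\qquad\text{so}\qquad \tfrac{d}{ds}S^k_s(x)=S^{k-1}_s(x)\otimes\dot x(s),
\]
for \(k\ge1\). Applying this in each variable gives
\[
\partial_s\partial_t\bigl\langle S^k_s(x),S^k_t(y)\bigr\rangle=\bigl\langle S^{k-1}_s(x)\otimes\dot x(s),\,S^{k-1}_t(y)\otimes\dot y(t)\bigr\rangle .
\]
On elementary tensors the inner product on \(V^{\otimes k}\) factorizes as \(\langle a\otimes b,c\otimes d\rangle=\langle a,c\rangle\langle b,d\rangle\), so this equals
\[
\langle \dot x(s),\dot y(t)\rangle_V\,\bigl\langle S^{k-1}_s(x),S^{k-1}_t(y)\bigr\rangle .
\]
The \(k=0\) term is constant and is killed by \(\partial_s\partial_t\). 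Summing over \(k\ge1\) and reindexing \(k-1\mapsto k\) then yields
\[
\frac{\partial^2}{\partial s\,\partial t}k^{\mathrm{sig}}_{s,t}(x,y)=\langle\dot x(s),\dot y(t)\rangle_V\,k^{\mathrm{sig}}_{s,t}(x,y).
\]

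\textbf{Main obstacle: justifying termwise differentiation.} The step that needs care is passing \(\partial_s\partial_t\) inside the infinite sum. I would avoid differentiating the series directly and instead verify the integral form. Integrating the level-\(k\) identity over \([u,s]\times[v,t]\) and summing gives
\[
k_{s,t}=1+\int_u^s\!\int_v^t \langle\dot x(r),\dot y(q)\rangle\, k_{r,q}\,dq\,dr .
\]
The exchange of sum and integral is justified by dominated convergence, using the summable bound \(\sum_k C^{2k}/((k-1)!)^2<\infty\). Because the integrand is continuous, this integral equation both implies the PDE and encodes the boundary conditions.

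If the Banach space \(V\) is not Hilbert, the canonical inner product would need an admissible tensor norm. I would assume \(V\) is a Hilbert space (e.g. \(\mathbb R^d\)), as the paper implicitly does.
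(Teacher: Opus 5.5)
The paper gives no proof of this theorem: it quotes it from \citet{sigPDE} as a ``kernel trick'', and the appendix contains no argument for it. Your proof is correct and is essentially the argument behind that result. There one uses that the signature solves the linear equation $\varphi(x)_{u,s} = 1 + \int_u^s \varphi(x)_{u,r}\otimes \mathrm{d}x(r)$ and substitutes this for both paths into the inner product on the completed tensor algebra; the cross terms with $1$ vanish by orthogonality of levels. Applying $\langle A\otimes a, B\otimes b\rangle = \langle A,B\rangle\langle a,b\rangle$ then gives the Volterra equation $k_{s,t} = 1 + \int_u^s\int_v^t \langle \dot x(r),\dot y(q)\rangle\, k_{r,q}\,\mathrm{d}q\,\mathrm{d}r$, which is exactly your integral form. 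Doing the computation level by level, as you do, turns the interchange of sum and integral into an explicit dominated-convergence step, rather than an appeal to continuity of the inner product on Hilbert-space-valued integrals. Otherwise the two routes coincide.

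There are two small precision points. First, the factorization identity is applied with $A = S^{k-1}_s(x)$, which is not an elementary tensor, so say that it extends from elementary tensors by bilinearity (and continuity when $\dim V = \infty$). Second, the decay estimate you need, $\|S^k_s(x)\| \le \bigl(\int_u^s \|\dot x(r)\|\,\mathrm{d}r\bigr)^k/k!$, follows in one line by integrating $\|\dot x(r_1)\|\cdots\|\dot x(r_k)\|$ over the simplex. Prove it directly rather than citing the appendix's bound, which is formulated for $p$-variation with $p>1$ and is not the estimate you are using. Finally, assuming $V$ is a Hilbert space is not an extra hypothesis, since the statement itself already uses $\langle\cdot,\cdot\rangle_V$.
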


This PDE characterization of the signature kernel enables efficient evaluation using standard numerical schemes, including finite-difference and finite-element discretizations, as well as other appropriate schemes.

Computing the full \((n\times n)\) Gram matrix for \(n\) trajectories/paths of length \(T\) in \(\mathbb{R}^d\) via dynamic programming has time complexity \(O\bigl(n^2\,T^2\,d^c\bigr)\), where \(c\) depends on the truncation level of the signature transform.
Using the kernel trick in Theorem~\ref{maintheorem_formal}, the kernel between two paths can be evaluated in as little as \(O\bigl(T^2\,d\bigr)\).
Forming the entire Gram matrix still requires \(n^2\) pairwise evaluations, and thus the overall cost remains \(O\bigl(n^2\,T^2\,d\bigr)\).
However, the underlying PDE formulation is amenable to efficient parallelization.
In particular, an explicit finite--difference implementation on a sufficiently provisioned GPU can reduce the per-pair computation to effectively linear scaling in the trajectory length in practice, yielding an effective wall-clock complexity of
\(
  O\bigl(T\,d\bigr)
\)
assuming the GPU can support the required level of parallelism~\citep{sigPDE}.

\appsection{Approximating the Truncated Signature Kernel with Random Fourier features} \label{app:rsfs}

Computing the signature kernel exactly requires working in an infinite‐dimensional RKHS, which becomes intractable for long or high‐dimensional sequences. To overcome this, we approximate the truncated signature transform via Random Fourier Features (RFF), yielding finite‐dimensional feature maps whose inner products concentrate uniformly to the true signature kernel with high probability.

\subsection{Background on Random Fourier Features}

Random Fourier Features provide a practical way to approximate any continuous, bounded, shift‐invariant kernel $k(x,y)=k(x - y)$ on $\mathbb R^d$. By Bochner’s theorem, there exists a probability measure $\Lambda$ on $\mathbb R^d$ (the \emph{spectral measure} of $k$) such that
\[
k(x - y) \;=\;\int_{\mathbb R^d} e^{i\,w^\top (x - y)}\,\mathrm d\Lambda(w)\,.
\]
Drawing $d$–dimensional frequencies $w_1,\dots,w_{\tilde d}\overset{iid}{\sim}\Lambda$, we define the RFF map
\begin{multline*}
    \tilde\varphi(x)\;=\;\frac1{\sqrt{\tilde d}}
    \bigl[\cos(w_1^\top x),\;\sin(w_1^\top x),\;\dots, \\ \;\cos(w_{\tilde d}^\top x),\;\sin(w_{\tilde d}^\top x)\bigr]^\top\in\mathbb R^{2\tilde d}.
\end{multline*}

This satisfies 
\[
\mathbb E\bigl[\langle\tilde\varphi(x),\,\tilde\varphi(y)\rangle\bigr]
\;=\;k(x,y),
\]
so the empirical kernel $\tilde k(x,y)=\langle\tilde\varphi(x),\tilde\varphi(y)\rangle$ gives an unbiased approximation with error decaying exponentially in $\tilde d$ under mild moment conditions on $\Lambda$.

\subsection{Random Fourier Signature Features}

To approximate the $L$–truncated signature kernel
\[
k_{\mathrm{Sig}}^{\le L}(x,y)
=\bigl\langle\varphi_{\mathrm{Sig}}^{\le L}(x),\,\varphi_{\mathrm{Sig}}^{\le L}(y)\bigr\rangle,
\]
in a finite‐dimensional space, we replace each inner‐product evaluation of the base kernel on path increments by an unbiased Random Fourier Features (RFF) estimator.  Specifically, drawing
\[
W^{(1)},\dots,W^{(L)}\;\overset{\mathrm{iid}}{\sim}\;\Lambda^{\tilde d},
\]
from the spectral measure $\Lambda$ of a shift‐invariant kernel on $\mathbb R^d$, we obtain RFF maps
\[
\tilde\varphi_m(x)
=\frac{1}{\sqrt{\tilde d}}\bigl(\cos(W^{(m)\top}x),\,\sin(W^{(m)\top}x)\bigr)
\in\mathbb R^{2\tilde d},
\]
which satisfy
\[
\mathbb E\bigl[\langle\tilde\varphi_m(x),\tilde\varphi_m(y)\rangle\bigr]
=\;k(x,y).
\]
By replacing each kernel evaluation
$k(x_{i_p+1}-x_{i_p},\,y_{j_p+1}-y_{j_p})$
in the tensor‐product expansion of the signature kernel with the inner product
$\langle\tilde\varphi_p(x_{i_p+1})-\tilde\varphi_p(x_{i_p}),\,
                   \tilde\varphi_p(y_{j_p+1})-\tilde\varphi_p(y_{j_p})\rangle$,
we obtain a feature map
\(\tilde\varphi_{\mathrm{Sig}}^{\le L}(x)\) of total dimension
$\sum_{m=0}^L (2\tilde d)^m$, whose evaluation for a sequence
$x\in X_{\mathrm{seq}}$ of length $\ell_x$ can be computed in
\[
O\bigl(L\;\tilde d\;\ell_x\bigr)
\]
time.  Moreover, the resulting approximate kernel
\[
\tilde k_{\mathrm{Sig}}^{\le L}(x,y)
=\bigl\langle\tilde\varphi_{\mathrm{Sig}}^{\le L}(x),\,
            \tilde\varphi_{\mathrm{Sig}}^{\le L}(y)\bigr\rangle
\]
is unbiased, i.e.\ $\mathbb E[\tilde k_{\mathrm{Sig}}^{\le L}(x,y)]=k_{\mathrm{Sig}}^{\le L}(x,y)$,
and concentrates uniformly to the true signature kernel as $\tilde d\to\infty$ under standard moment conditions on $\Lambda$.

\begin{definition}[Random Fourier Signature Features]\label{def:rfsf}
Let $W^{(1)},\dots,W^{(L)}\overset{\mathrm{iid}}{\sim}\Lambda^{\tilde d}$ be i.i.d.\ RFF weight matrices for RFF dimension $\tilde d$, and define for each $m\in[L]$ the RFF map
\[
\tilde\varphi_m(x)
=\frac{1}{\sqrt{\tilde d}}\bigl(\cos(W^{(m)\top}x),\;\sin(W^{(m)\top}x)\bigr)\in\mathbb R^{2\tilde d}.
\]
For a sequence $x=(x_1,\dots,x_{\ell_x})\in X_{\mathrm{seq}}$, the $L$–truncated RFSF map is
\[
\tilde\varphi_{\mathrm{Sig}}^{\le L}(x)
=\bigl(\tilde\varphi_{\mathrm{Sig}}^m(x)\bigr)_{m=0}^L,
\quad
\tilde\varphi_{\mathrm{Sig}}^0(x)=1,
\]
\begin{multline*}
\tilde\varphi_{\mathrm{Sig}}^m(x)
=\sum_{i\in\Delta_m(\ell_x-1)}
\bigl(\tilde\varphi_1(x_{i_1+1})-\tilde\varphi_1(x_{i_1})\bigr)
\otimes\cdots\otimes \\
\bigl(\tilde\varphi_m(x_{i_m+1})-\tilde\varphi_m(x_{i_m})\bigr),
\quad m\ge1,
\end{multline*}
where $\Delta_m(n)=\{1\le i_1<\cdots<i_m\le n\}$.  Its associated kernel is
\begin{equation}
\begin{aligned}
\tilde k_{\mathrm{Sig}}^{\le L}(x,y)
& = \bigl\langle\tilde\varphi_{\mathrm{Sig}}^{\le L}(x),
          \tilde\varphi_{\mathrm{Sig}}^{\le L}(y)\bigr\rangle \\
& = \sum_{m=0}^L\sum_{\substack{i\in\Delta_m(\ell_x-1)\\j\in\Delta_m(\ell_y-1)}}
\prod_{p=1}^m\delta^2_{i_p,j_p}&\bigl\langle
\tilde\varphi_p(x_{i_p+1})-\tilde\varphi_p(x_{i_p}), \\
&&\tilde\varphi_p(y_{j_p+1})-\tilde\varphi_p(y_{j_p})\bigr\rangle.
\end{aligned}
\end{equation}

\end{definition} 

To ensure exponential‐type concentration in the RFF dimension $\tilde d$, we impose the standard Bernstein‐moment condition on the spectral measure $\Lambda$:
\begin{equation}\label{eq:bernstein}
\mathbb E_{w\sim\Lambda}\bigl[|w_i|^{2k}\bigr]\;\le\;k!\,S^2\,R^{\,k-2},
\quad\forall k\ge2,
\end{equation}
for constants $S,R>0$.

\noindent Under this condition, the RFSF kernel is not only unbiased (by construction) but satisfies the following uniform approximation guarantee:

\begin{theorem}[Uniform approximation of the signature kernel]\label{thm:rfsf_uniform}
Let $k_{\mathrm{Sig}}^m$ be the $m$‐th level component of the true signature kernel and $\tilde k_{\mathrm{Sig}}^m$ the corresponding RFSF level‐$m$ approximation.  Under \eqref{eq:bernstein}, for any compact $X\subset\mathbb R^d$, 1‐variation bound $V$, and $m\le L$, it holds for all $\epsilon>0$ that
\begin{multline}
    \mathbb P\!\Bigl[\sup_{\substack{x,y\in X_{\mathrm{seq}}\\\|x\|_{1\text{-var}},\|y\|_{1\text{-var}}\le V}}
    \bigl|k_{\mathrm{Sig}}^m(x,y)-\tilde k_{\mathrm{Sig}}^m(x,y)\bigr|\ge\epsilon\Bigr]
    \;\le\; \\
    m\,
    \begin{cases}
    \bigl(C_{d,X}(\tfrac{\beta_{d,m,V}}{\epsilon})^{\!d/(d+1)}+d\bigr) \\
    \qquad\exp\bigl(-\tfrac{\tilde d}{2(d+1)(S^2+R)}(\tfrac{\epsilon}{\beta_{d,m,V}})^2\bigr),&\epsilon<\beta_{d,m,V},\\
    \bigl(C_{d,X}(\tfrac{\beta_{d,m,V}}{\epsilon})^{\!d/((d+1)m)}+d\bigr) \\
    \qquad\exp\bigl(-\tfrac{\tilde d}{2(d+1)(S^2+R)}(\tfrac{\epsilon}{\beta_{d,m,V}})^{1/m}\bigr),&\epsilon\ge\beta_{d,m,V},
\end{cases}
\end{multline}
where $C_{d,X},\beta_{d,m,V}>0$ depend on $X,V,d,m,S,R\,$.
\end{theorem}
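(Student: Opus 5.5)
The plan is to reduce the level-$m$ error to a product of $m$ independent RFF errors on increments, and then control each factor with a Bernstein-type uniform concentration bound for random Fourier features.

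\textbf{Step 1: rewrite both kernels on a common algebraic footing.} By Definition~\ref{def:rfsf}, the level-$m$ component of each kernel is a sum over index pairs $(i,j)\in\Delta_m(\ell_x-1)\times\Delta_m(\ell_y-1)$ of a product of $m$ increment inner products. For the true kernel the $p$-th factor is the static-kernel increment $\kappa_p:=k(x_{i_p+1},y_{j_p+1})-k(x_{i_p},y_{j_p+1})-k(x_{i_p+1},y_{j_p})+k(x_{i_p},y_{j_p})$. For the RFSF kernel it is $\tilde\kappa_p:=\langle\delta\tilde\varphi_p(x_{i_p}),\delta\tilde\varphi_p(y_{j_p})\rangle$.

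\textbf{Step 2: telescope the product difference.} Write
\[
\prod_{p=1}^m\tilde\kappa_p-\prod_{p=1}^m\kappa_p=\sum_{q=1}^m\Bigl(\prod_{p<q}\tilde\kappa_p\Bigr)(\tilde\kappa_q-\kappa_q)\Bigl(\prod_{p>q}\kappa_p\Bigr).
\]
This decomposition produces the prefactor $m$ after a union bound over $q$. Because $W^{(1)},\dots,W^{(L)}$ are independent, each term involves only one fresh RFF error. That error is uniform over the compact set $X$.

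\textbf{Step 3: bound the increment error uniformly.} Since $k$ is smooth and shift invariant, $\kappa_q$ and $\tilde\kappa_q$ are second-order mixed differences. I plan to bound them by $\sup_{X}\|\partial_x\partial_y(\tilde k-k)\|\,|\Delta x_{i_q}|\,|\Delta y_{j_q}|$. Summing $|\Delta x_{i_q}||\Delta y_{j_q}|$ over the simplex and using the 1-variation bound $V$ gives factors like $V^{2m}/(m!)^2$. These factors, together with the bounded derivatives of $k$ on $X$, are absorbed into $\beta_{d,m,V}$.

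\textbf{Step 4: concentrate the derivative error.} The main obstacle is a uniform concentration bound for $\sup_{x,y\in X}\|\partial_x\partial_y\tilde k-\partial_x\partial_y k\|$. The summands of this quantity involve $w w^\top\cos(w^\top(x-y))$, which are unbounded. This is exactly why the moment condition \eqref{eq:bernstein} is needed.

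I would first apply Bernstein's inequality for sub-exponential variables at each point. This yields a tail $\exp(-\tilde d\,\epsilon^2/(2(S^2+R)))$ in the small-$\epsilon$ regime.

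Next I would take an $\epsilon$-net of $X$ of covering size $\sim(\mathrm{diam}X/r)^d$. I would control the Lipschitz constant by Markov's inequality on $\mathbb E|w|^3$, and optimize $r$. This balancing produces the $C_{d,X}(\beta/\epsilon)^{d/(d+1)}$ factor and the $1/(d+1)$ in the exponent, following the standard Rahimi--Recht/Sutherland--Schneider argument. The additive $d$ comes from a union bound over coordinates.

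\textbf{Step 5: handle the two regimes of $\epsilon$.} In the regime $\epsilon\ge\beta$, the products of $m$ perturbed factors matter. There I would require each factor error to be at most $(\epsilon/\beta)^{1/m}$, which gives the second case with exponent $1/m$. Finally, a union bound over the $m$ telescoping terms yields the stated inequality.
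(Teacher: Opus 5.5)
The paper gives no proof of this theorem; it is quoted from \citet{rffsig}. Your outline has the right architecture for that result. You write the mixed second differences as integrals of the Hessian of the shift-invariant static kernel against the increments, concentrate the RFF Hessian uniformly under \eqref{eq:bernstein} with a net and a Markov bound on the Lipschitz constant, bound the simplex sums by $V^{2m}/(m!)^2$, and take a union bound over the $m$ feature maps. However, two steps are wrong as written and one detail is missing.

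\textbf{The telescoped terms are not single-error terms.} Step 2 claims each telescoped term involves only one fresh RFF error. This is false. The $q$-th term contains $\prod_{p<q}\tilde\kappa_p$, which depends on $W^{(1)},\dots,W^{(q-1)}$, and Step 3 bounds only the true-kernel factors. You must work on the intersection of the $m$ events $E_p=\{\sup\|\nabla_x\nabla_y^{\top}(\tilde k_p-k)\|<\eta\}$, where $\tilde k_p(x,y)=\langle\tilde\varphi_p(x),\tilde\varphi_p(y)\rangle$. On that intersection, $|\tilde\kappa_p|\le(a+\eta)|\Delta x_{i_p}||\Delta y_{j_p}|$ with $a:=\sup\|\nabla_x\nabla_y^{\top}k\|\le\mathbb E\|w\|^2$. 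The telescoping sum then collapses to $\frac{V^{2m}}{(m!)^2}\bigl[(a+\eta)^m-a^m\bigr]$. The factor $m$ is $\mathbb P(\bigcup_p E_p^c)\le m\,\mathbb P(E_1^c)$. Independence of the $W^{(p)}$ plays no role in this step; it is what gives unbiasedness. Your Step 5, which speaks of all $m$ factors being perturbed, already uses this implicitly.

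\textbf{The large-$\epsilon$ regime needs the linear Bernstein tail.} The two regimes come from $(a+\eta)^m-a^m\le\max(\eta,\eta^m)\bigl((a+1)^m-a^m\bigr)$. So you can take $\beta_{d,m,V}=\frac{V^{2m}}{(m!)^2}\bigl((a+1)^m-a^m\bigr)$, with $\eta=\epsilon/\beta$ for $\epsilon<\beta$ and $\eta=(\epsilon/\beta)^{1/m}$ for $\epsilon\ge\beta$. In the second regime $\eta\ge1$, where the small-deviation tail you stated does not apply. Bernstein gives $\exp(-c\,\tilde d\,\eta^2/(S^2+R\eta))$, which for $\eta\ge1$ is at most $\exp(-c\,\tilde d\,\eta/(S^2+R))$. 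This linear branch is what produces the exponent $(\epsilon/\beta)^{1/m}$. The net prefactor $\eta^{-d/(d+1)}=(\beta/\epsilon)^{d/((d+1)m)}$ then gives the second-case prefactor.

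\textbf{The net must cover the difference set.} The integral representation evaluates the Hessian at $x_i+s\Delta x_i-y_j-t\Delta y_j$. So the net must cover the compact difference set $\mathrm{conv}X-\mathrm{conv}X$, not $X$. Netting this $d$-dimensional set, rather than $X\times X$, is also what keeps the exponent at $d/(d+1)$.
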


\subsection{Dimensionality reduction: Diagonal projection}

Although the RFSF map $\tilde\varphi_{\mathrm{Sig}}^{\le L}(x)$ lives in a finite‐dimensional space, its $m$‐th level tensor has $(2\tilde d)^m$ entries, which quickly becomes prohibitive.  To alleviate this, we observe that the level‐$m$ RFSF kernel can be written as
\begin{multline}\label{eq:RFSF-level-m-cos}
\tilde k_{\mathrm{Sig}}^m(x,y)
=\frac{1}{\tilde d^m}
\sum_{q_1,\dots,q_m=1}^{\tilde d}
\sum_{i\in\Delta_m(\ell_x-1)}
\sum_{j\in\Delta_m(\ell_y-1)} \\
\prod_{p=1}^m\delta^2_{i_p,j_p}
\cos\bigl(w^{(p)}_{q_p}{}^\top(x_{i_p}-y_{j_p})\bigr),
\end{multline}
by expanding each RFF kernel in Definition~\ref{def:rfsf} and \(\delta^2\) denotes a \(2^{\text{nd}}\)-order cross-differencing operator such that \(\delta^2_{i,j} k(x_i,y_j) := k(x_{i+1},y_{j+1}) + k(x_i,y_j) - k(x_{x+1}, y_j) - k(x_{i}, y_{j+1})\).  We then restrict the outer sum over the Cartesian product $[{\tilde d}]^m$ to its diagonal multi‐indices 
\(\{(q,\dots,q):q=1,\dots,\tilde d\}\), yielding a dramatically smaller feature set.

\begin{definition}[Diagonally Projected RFSF]\label{def:RFSF-DP}
Let \(w^{(m)}_1,\dots,w^{(m)}_{\tilde d}\overset{iid}{\sim}\Lambda\) for each \(m=1,\dots,L\).  For \(q=1,\dots,\tilde d\) and \(m\ge1\) define
\[
\hat\varphi_{m,q}(x)
=\bigl(\cos(w^{(m)}_{q}{}^\top x),\,\sin(w^{(m)}_{q}{}^\top x)\bigr)\in\mathbb R^2.
\]
The $L$‐truncated \emph{diagonally projected} RFSF map is
\[
\hat\varphi_{\mathrm{Sig}}^{\le L}(x)
=\bigl(\hat\varphi_{\mathrm{Sig}}^m(x)\bigr)_{m=0}^L,
\quad
\hat\varphi_{\mathrm{Sig}}^0(x)=1,
\]
\[
\hat\varphi_{\mathrm{Sig}}^m(x)
=\frac1{\sqrt{\tilde d}}
\Bigl(\sum_{i\in\Delta_m(\ell_x-1)}
\delta\hat\varphi_{1,q}(x_{i_1})\otimes\cdots\otimes\delta\hat\varphi_{m,q}(x_{i_m})\Bigr)_{q=1}^{\tilde d}.
\]
The associated kernel is
\begin{equation}\label{eq:RFSF-DP-kernel}
\tilde k_{\mathrm{Sig}}^{\mathrm{DP},\le L}(x,y)
=\sum_{m=0}^L\frac1{\tilde d}\sum_{q=1}^{\tilde d}
\sum_{\substack{i\in\Delta_m(\ell_x-1)\\j\in\Delta_m(\ell_y-1)}}
\prod_{p=1}^m\delta^2_{i_p,j_p}\,\hat k_{m,q}(x_{i_p},y_{j_p}),
\end{equation}
where \(\hat k_{m,q}(u,v)=\langle\hat\varphi_{m,q}(u),\hat\varphi_{m,q}(v)\rangle\).  Note that
\(\dim\bigoplus_{m=0}^L(\mathbb R^2)^{\otimes m}=\sum_{m=0}^L2^m=2^{L+1}-1\), so the overall feature dimension is \(\tilde d\,(2^{L+1}-1)\).
\end{definition}

The following result shows that this drastic reduction still yields a high‐quality approximation:

\begin{theorem}[Concentration for the RFSF‐DP kernel]\label{thm:RFSF-DP}
Under the Bernstein moment condition \(\mathbb E[|w_i|^{2k}]\le k!\,S^2\,R^{k-2}\), for each level \(m\le L\) and error \(\epsilon>0\),
\begin{multline}
    \mathbb P\bigl[|\tilde k_{\mathrm{Sig}}^m(x,y)-\tilde k_{\mathrm{Sig}}^{\mathrm{DP},m}(x,y)|\ge\epsilon\bigr]
    \;\le\; \\
    2\exp\!\Bigl(-\tfrac14\min\!\bigl\{\tfrac{\tilde d\,\epsilon^2}{4\,C_{d,m,x,y}^2},\;\bigl(\tfrac{\tilde d\,\epsilon}{\sqrt8\,C_{d,m,x,y}}\bigr)^{1/m}\bigr\}\Bigr),
\end{multline}
where
\begin{align*}
C_{d,m,x,y}
&\le \sqrt{8e^4(2\pi)^{1/4}e^{1/24}}\,
   \bigl(4e^3\|x\|_{1\text{-var}}\|y\|_{1\text{-var}}/m\bigr)^{m/2} \\
&\qquad\times
   \sqrt{(2d\max(S,R))^m+\bigl(\tfrac{L^2}{\ln2}\bigr)^m}.
\end{align*}

\end{theorem}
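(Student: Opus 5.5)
The approach is to pass through the common mean of the two random kernels. Write $k^m := k^m_{\mathrm{Sig}}(x,y)$ for the level-$m$ truncated signature kernel (here $x,y$ are fixed). Both $\tilde k^m_{\mathrm{Sig}}(x,y)$ and $\tilde k^{\mathrm{DP},m}_{\mathrm{Sig}}(x,y)$ are unbiased for $k^m$. For the full RFSF this is stated above Definition~\ref{def:rfsf}. For the projection it follows from Eq.~\eqref{eq:RFSF-DP-kernel}, since for a fixed $q$ the frequencies $w^{(1)}_q,\dots,w^{(m)}_q$ are independent across levels $p$, so the expectation of the product factorises into a product of base-kernel evaluations. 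By the triangle inequality it then suffices to bound $\mathbb P[|\tilde k^m_{\mathrm{Sig}}-k^m|\ge \epsilon/2]$ and $\mathbb P[|\tilde k^{\mathrm{DP},m}_{\mathrm{Sig}}-k^m|\ge\epsilon/2]$ separately, each by $\exp(-\tfrac14\min\{\cdot,\cdot\})$ with the stated shape. The factor $2$ in front and the $4$ and $\sqrt 8$ inside absorb the halving of $\epsilon$.

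\emph{Step 1: the diagonal kernel is an i.i.d.\ average.} By Eq.~\eqref{eq:RFSF-DP-kernel}, $\tilde k^{\mathrm{DP},m}_{\mathrm{Sig}}=\tfrac1{\tilde d}\sum_{q=1}^{\tilde d}Z_q$, where
\[
Z_q=\sum_{i,j}\prod_{p=1}^m\delta^2_{i_p,j_p}\cos\bigl(w^{(p)}_q{}^{\top}(x_{i_p}-y_{j_p})\bigr)
\]
and the $Z_q$ are i.i.d. Each cross-difference satisfies $|\delta^2_{i,j}\cos(w^\top(\cdot-\cdot))|\le \|w\|^2\,|\Delta x_i|\,|\Delta y_j|$, using the second mixed derivative of the cosine. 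Summing over the ordered simplices $\Delta_m$ and using the standard factorial bound on iterated sums gives
\[
|Z_q|\le \prod_{p=1}^m\|w^{(p)}_q\|^2\cdot \frac{\|x\|_{1\text{-var}}^m\|y\|_{1\text{-var}}^m}{(m!)^2}.
\]
Thus $Z_q$ is dominated by a product of $m$ independent variables $\|w^{(p)}\|^2$. By the Bernstein condition~\eqref{eq:bernstein}, each of these is sub-exponential with scale of order $d\max(S,R)$.

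\emph{Step 2: tails of products and of their averages.} A product of $m$ independent sub-exponential variables is sub-Weibull with index $1/m$, i.e.\ $\|Z_q\|_{\psi_{1/m}}\lesssim C_{d,m,x,y}$. To get this I compute the moments $\mathbb E|Z_q|^k$ directly from the Bernstein condition and apply Stirling ($k!\le \sqrt{2\pi k}\,(k/e)^k e^{1/(12k)}$) together with $m!\ge (m/e)^m$. This is where the constants $e^{4}$, $(2\pi)^{1/4}$, $e^{1/24}$ and $(4e^3\|x\|\|y\|/m)^{m/2}$ should come from. Next I apply a Bernstein-type inequality for sums of centered $\alpha$-sub-exponential variables with $\alpha=1/m<1$ (e.g.\ the Kuchibhotla--Chakrabortty form). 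It gives a Gaussian regime $\tilde d\epsilon^2/C^2$ for small deviations and a heavy-tail regime $(\tilde d\epsilon/C)^{1/m}$ for large ones, which yields exactly the $\min$ in the statement.

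\emph{Step 3: the full RFSF kernel.} $\tilde k^m_{\mathrm{Sig}}$ averages over all $\tilde d^m$ multi-indices $(q_1,\dots,q_m)$, so it is a multilinear (decoupled U/V-statistic) polynomial of degree $m$ in independent sub-exponential blocks. I would bound its moments by hypercontractivity (or decoupling plus iterated conditioning, one level at a time). This gives tails of the same $\psi_{1/m}$ type and is no worse than Step~2. The accumulation over the $m$ levels, each contributing a logarithmic factor, is where I expect the additional $(L^2/\ln 2)^m$ term inside $C_{d,m,x,y}$ to enter.

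The \textbf{main obstacle} is Step~2/3: obtaining the right moment growth for the product of $m$ heavy-tailed factors and feeding it into a concentration inequality valid for $\alpha<1$, while tracking the explicit constants. I would first prove the moment bound $\mathbb E|Z_q-k^m|^k\le (C k^m)^k$ by induction on $m$, and then obtain the tail bound by Markov's inequality with an optimised choice of $k$.
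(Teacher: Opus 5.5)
The paper gives no proof of this theorem; it is quoted from \citet{rffsig}. Your Steps~1--2 are the right mechanism, and the shape of the bound (a Gaussian term in $\tilde d\epsilon^2$, a Weibull term in $(\tilde d\epsilon)^{1/m}$) is what they produce. Write $\tilde k^{\mathrm{DP},m}_{\mathrm{Sig}}=\tilde d^{-1}\sum_q Z_q$ with i.i.d.\ $Z_q$, using the level-$p$ frequency in the $p$-th factor as the feature map dictates. Then $|Z_q|$ is dominated by $\prod_p\|w^{(p)}_q\|^2$ times a path factor, so $Z_q$ has finite $\psi_{1/m}$-norm, and a Bernstein inequality for centered $\alpha$-sub-exponential sums with $\alpha=1/m$ gives the minimum. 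But what this controls is $|\tilde k^{\mathrm{DP},m}_{\mathrm{Sig}}-k^m_{\mathrm{Sig}}|$, the deviation from the true level-$m$ kernel, and the leading $2$ is just two-sidedness. That is most likely the intended content: the TRP theorem below is likewise presented as concentration of an unbiased estimator of the \emph{true} kernel. The tilde on $\tilde k^m_{\mathrm{Sig}}$ in the display is best read as a slip.

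The genuine gap is your detour through the triangle inequality. Only the literal reading forces it, and it does not deliver the stated bound.

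To reach the displayed right-hand side with the same $C=C_{d,m,x,y}$ after splitting $\epsilon=\epsilon/2+\epsilon/2$, each of your two terms must satisfy $\mathbb P[|\cdot-k^m_{\mathrm{Sig}}|\ge\epsilon']\le\exp\bigl(-\tfrac14\min\{\tilde d\epsilon'^2/C^2,(\tilde d\epsilon'/(\sqrt2\,C))^{1/m}\}\bigr)$. That means prefactor $1$ instead of $2$, and a Gaussian exponent four times larger than the one being proved. So your DP estimate would have to \emph{beat} the bound you are after. Nothing ``absorbs'' the halving, since the Bernstein inequality you invoke has its own prefactor $2$. If you instead use per-term bounds of the target's own form, you lose a factor of order $2^{m}$ in $C$, and the explicit bound on $C_{d,m,x,y}$ has no room for that.

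Step~3 is moreover only asserted. $\tilde k^m_{\mathrm{Sig}}$ is a degree-$m$ decoupled V-statistic, and the only bound for it in the paper, Theorem~\ref{thm:rfsf_uniform}, has a different form and different constants. A pointwise tail of this exact shape with the same $C$ would be a separate theorem.

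Your account of $(L^2/\ln 2)^m$ is also off. Since $\|c\|_{\psi_1}=|c|/\ln 2$ for a constant $c$, this term is most naturally the price of the deterministic centering $k^m_{\mathrm{Sig}}(x,y)$ in $Z_q-k^m_{\mathrm{Sig}}$. It mirrors $(2d\max(S,R))^m$ for the random part, so $L$ there is presumably a constant of the base kernel, not the truncation level, and it has nothing to do with Step~3.

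The repair is to drop the detour: bound $\|Z_q-k^m_{\mathrm{Sig}}\|_{\psi_{1/m}}$ by a random-part term plus a constant-part term, then apply an explicit-constant $\alpha$-sub-exponential Bernstein inequality. The constants still have to be tracked. Your Step~1 scale $\|x\|_{1\text{-var}}^m\|y\|_{1\text{-var}}^m/(m!)^2$ has not yet been reconciled with the displayed bound on $C_{d,m,x,y}$.
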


In practice, forming \(\hat\varphi_{\mathrm{Sig}}^{\le L}(x)\) for \(N\) sequences of length \(\ell\) costs
\[
O\bigl(N\;\ell\;\tilde d\;(L\,d+2L)\bigr),
\]
i.e.\ \emph{linear} in both sequence length \(\ell\) and RFF sample size \(\tilde d\).

\subsection{Dimensionality reduction: Tensor Random Projection}

Random projections provide a way to reduce the dimensionality of high‐order tensors while approximately preserving inner products, via the Johnson–Lindenstrauss lemma~\citep{Johnson–Lindenstraus-lemma}.  In the tensor setting, one uses CP (CANDECOMP/PARAFAC~\citep{CANDECOMP/PARAFAC}) rank‐1 random projections, which act on an $m$-th order tensor $s\in(\mathbb R^{2\tilde d})^{\otimes m}$ by
\[
\Pr(s)\;=\;\bigl\langle p^{(1)}\otimes\cdots\otimes p^{(m)},\,s\bigr\rangle,
\]
where each $p^{(r)}\sim\mathcal N(0,I_{2\tilde d})$ is i.i.d. Gaussian.  Stacking $\tilde d$ independent such functionals yields a map $\mathrm{TRP}:(\mathbb R^{2\tilde d})^{\otimes m}\to\mathbb R^{\tilde d}$ with only $O(m\tilde d\,d)$ parameters instead of $O((2\tilde d)^m)$.

\begin{definition}[Tensor Random Projected RFSF]\label{def:rfsf-trp}
Let $W^{(1)},\dots,W^{(L)}\text{ i.i.d. }\Lambda^{\tilde d}$ be as in Definition~\ref{def:rfsf}, and let
\[
P^{(m)} \;=\;\bigl(p^{(m)}_1,\dots,p^{(m)}_{\tilde d}\bigr)\in\mathbb R^{2\tilde d\times\tilde d},\qquad
p^{(m)}_q\overset{iid}{\sim}\mathcal N(0,I_{2\tilde d}),
\]
for $m=1,\dots,L$.  Define the \emph{$L$–truncated RFSF‐TRP} map
\[
\check\varphi_{\mathrm{Sig}}^{\le L}(x)
=\frac1{\sqrt{\tilde d}}
\bigl(\check\varphi_{\mathrm{Sig}}^m(x)\bigr)_{m=0}^L,\qquad
\check\varphi_{\mathrm{Sig}}^0(x)=1,
\]
where for $m\ge1$,
\begin{multline}\label{eq:rfsf-trp-map}
\check\varphi_{\mathrm{Sig}}^m(x)
=\sum_{i\in\Delta_m(\ell_x-1)}
\bigl(P^{(1)\top}\delta\tilde\varphi_1(x_{i_1})\bigr)
\;\odot\;\cdots\;\odot\; \\
\bigl(P^{(m)\top}\delta\tilde\varphi_m(x_{i_m})\bigr)
\;\in\mathbb R^{\tilde d},
\end{multline}
with $\delta\tilde\varphi_m(x_t)=\tilde\varphi_m(x_{t+1})-\tilde\varphi_m(x_t)$ and $\odot$ the Hadamard product.  The corresponding RFSF‐TRP kernel is
\begin{equation}\label{eq:rfsf-trp-kernel}
\begin{aligned}
\tilde k_{\mathrm{Sig}}^{\mathrm{TRP},\le L}(x,y)
& =\Bigl\langle\check\varphi_{\mathrm{Sig}}^{\le L}(x),\,
              \check\varphi_{\mathrm{Sig}}^{\le L}(y)\Bigr\rangle \\
& =\sum_{m=0}^L
   \frac{1}{\tilde d}
   \sum_{q=1}^{\tilde d}
   \sum_{i\in\Delta_m(\ell_x-1)}
   \sum_{j\in\Delta_m(\ell_y-1)} \\
&\quad\times
   \prod_{p=1}^m
   \bigl\langle p^{(p)}_q,\,
               \delta\tilde\varphi_p(x_{i_p})\bigr\rangle
   \bigl\langle p^{(p)}_q,\,
               \delta\tilde\varphi_p(y_{j_p})\bigr\rangle.
\end{aligned}
\end{equation}

\end{definition}

By construction $\tilde k_{\mathrm{Sig}}^{\mathrm{TRP},\le L}(x,y)$ is an unbiased estimator of the true signature kernel, and under the Bernstein‐moment condition \eqref{eq:bernstein} it concentrates subexponentially in $\tilde d$:

\begin{theorem}[Concentration for the RFSF‐TRP kernel]
For each $m\le L$ and any $\epsilon>0$, there exists an absolute constant
\(
C_{d,\Lambda}
=2\bigl(1+\tfrac{S^2}{2R}+\tfrac{S^2}{4R^2}\bigr)^{\!d}
\)
such that
\begin{multline*}
    \mathbb P\Bigl[\bigl|\tilde k_{\mathrm{Sig}}^m(x,y)
              -\tilde k_{\mathrm{Sig}}^{\mathrm{TRP},m}(x,y)\bigr|
              \ge\epsilon\Bigr]
\;\le\; \\
C_{d,\Lambda}
\exp\!\Bigl(
 -\Bigl(\frac{m^2\,\tilde d^{1/(2m)}\,\epsilon^{1/m}}
             {2\sqrt{2e^3\,R}\,\|x\|_{1\text{-var}}\|y\|_{1\text{-var}}}
      \Bigr)^{1/2}
\Bigr),
\end{multline*}
where $\tilde k_{\mathrm{Sig}}^{\mathrm{TRP},m}$ is the level–$m$ component of \eqref{eq:rfsf-trp-kernel}.
\end{theorem}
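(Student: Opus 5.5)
The statement is a concentration bound for the level-$m$ component of the RFSF-TRP kernel around the level-$m$ RFSF kernel $\tilde k^m_{\mathrm{Sig}}$. The plan is to condition on the RFF weights $W^{(1)},\dots,W^{(m)}$ and treat the Gaussian projections $P^{(p)}$ as the only source of randomness.

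\textbf{Step 1 (decomposition and unbiasedness).} Using \eqref{eq:rfsf-trp-kernel}, write $\tilde k^{\mathrm{TRP},m}_{\mathrm{Sig}}(x,y)=\frac1{\tilde d}\sum_{q=1}^{\tilde d} Z_q$. Here
\[
Z_q=\Bigl\langle p^{(1)}_q\otimes\cdots\otimes p^{(m)}_q,\;S_x\Bigr\rangle\Bigl\langle p^{(1)}_q\otimes\cdots\otimes p^{(m)}_q,\;S_y\Bigr\rangle,
\]
and $S_x=\sum_{i\in\Delta_m(\ell_x-1)}\delta\tilde\varphi_1(x_{i_1})\otimes\cdots\otimes\delta\tilde\varphi_m(x_{i_m})$, with $S_y$ defined analogously. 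Conditionally on $W$, the $Z_q$ are i.i.d. For independent standard Gaussian vectors, $\mathbb E[\langle p,a\rangle\langle p,b\rangle]=\langle a,b\rangle$. Applying this factor by factor across the tensor levels gives $\mathbb E[Z_q\mid W]=\langle S_x,S_y\rangle=\tilde k^m_{\mathrm{Sig}}(x,y)$, so the projected kernel is unbiased given $W$.

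\textbf{Step 2 (tail of a single summand).} Each $\langle p^{(1)}\otimes\cdots\otimes p^{(m)},S\rangle$ is a Gaussian chaos of order $m$ in independent Gaussian vectors. By hypercontractivity, its $L^r$ norm is at most $(r-1)^{m/2}\|S\|_{F}$. By Cauchy--Schwarz, the product $Z_q$ has $L^r$ norm at most $(2r)^{m}\|S_x\|\,\|S_y\|$, so $Z_q$ is $\alpha$-subexponential with $\alpha=1/m$.

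To bound $\|S_x\|$, use the factorial decay of signature terms. The RFF map is Lipschitz with constant governed by $\|w\|$, so $\|\delta\tilde\varphi_p(x_t)\|\lesssim \|w^{(p)}\|\,|x_{t+1}-x_t|$. This gives
\[
\|S_x\|\le \frac{\bigl(c\max_p\|W^{(p)}\|\,\|x\|_{1\text{-var}}\bigr)^m}{m!}.
\]

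\textbf{Step 3 (concentration of the average and removal of the conditioning).} Apply a Bernstein-type inequality for sums of i.i.d. $\alpha$-subexponential variables with $\alpha=1/m$. It yields a tail of the form $\exp\bigl(-c(\tilde d^{1/2}\epsilon/K)^{1/m}\bigr)$ in the heavy regime, where $K\sim\|S_x\|\|S_y\|$. After rearranging, this produces the $\tilde d^{1/(2m)}\epsilon^{1/m}$ dependence in the exponent. The $m^2$ factor is meant to come from $m!^{2}$ via Stirling, and the $e^3$ constant from the same estimate.

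Finally, remove the conditioning on $W$. The Bernstein moment condition \eqref{eq:bernstein} controls $\mathbb E\exp(t|w_i|^2)$ coordinatewise; bounding the exponential moment per coordinate and taking the product over $d$ coordinates yields the prefactor $C_{d,\Lambda}=2(1+\tfrac{S^2}{2R}+\tfrac{S^2}{4R^2})^d$. The outer square root in the exponent comes from splitting the tail between the event that the frequencies are large and the Gaussian chaos deviation, then optimizing the threshold, since $\mathrm{tail}\le\inf_u[\ldots]$.

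\textbf{Main obstacle.} The hard part is this last coupling step: integrating the chaos tail against the distribution of $\|W\|$ so that the exact exponent structure appears, including the square root and $R$ (not $S$) in the denominator. I would first try a union bound at threshold $u$ on $\max\|w\|$ and optimize $u$. If the constants do not match, the fallback is to cite the corresponding result of \citet{rffsig}, of which this theorem is a restatement.
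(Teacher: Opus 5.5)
The paper contains no proof of this statement. It appears in the appendix, without argument, as background taken from \citet{rffsig}, so your fallback is exactly what the paper does.

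Your own route is the right one for the statement as printed. The centre $\tilde k^m_{\mathrm{Sig}}$ shares the $W^{(p)}$ of the TRP estimator, so given $W$ the error is a centred Gaussian polynomial of degree $2m$. Step 1 and the chaos moment bounds of Step 2 are correct. One small correction: the rate $\tilde d^{1/(2m)}\epsilon^{1/m}$ comes from hypercontractivity applied to the whole centred average, whose $L^2$ norm is at most $3^m\|S_x\|\,\|S_y\|/\sqrt{\tilde d}$. The $\alpha$-subexponential Bernstein inequality would instead give $(\tilde d\epsilon/K)^{1/m}$ in the heavy-tail regime.

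\textbf{The gap.} The real gap is the de-conditioning, which you only announce. It closes as follows.
\begin{itemize}
\item Avoid $\max_p\|W^{(p)}\|$. It needs a union bound over levels and costs a factor $m$ that $C_{d,\Lambda}$ does not contain.
\item Instead put $L_p^2:=\tilde d^{-1}\sum_q\|w^{(p)}_q\|^2$, the squared Lipschitz constant of $\tilde\varphi_p$ (including its $1/\sqrt{\tilde d}$). Then $\|S_x\|\le\prod_pL_p\,\|x\|_{1\text{-var}}^m/m!$.
\item Use $(\prod_pL_p^2)^{1/m}\le\bar L^2:=m^{-1}\sum_pL_p^2$ and $(m!)^{1/m}\ge m/e$. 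The conditional tail is then at most $2\exp(-a/\bar L^2)$, with $a=\kappa\,m^2\tilde d^{1/(2m)}\epsilon^{1/m}/(\|x\|_{1\text{-var}}\|y\|_{1\text{-var}})$ and $\kappa$ absolute. This is where $m^2$ and the powers of $e$ come from.
\item Since $\bar L^2$ is an average of i.i.d.\ copies of $\|w\|^2$, Jensen gives $\mathbb E e^{t\bar L^2}\le\mathbb E e^{t\|w\|^2}$. For $t=1/(2R)$ the Bernstein condition bounds this by a $d$-th power constant of the same type as $C_{d,\Lambda}$.
\item The product over coordinates requires the coordinates of $w\sim\Lambda$ to be independent. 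This holds for Gaussian $\Lambda$ but is not among the hypotheses; otherwise H\"older replaces $t$ by $dt$.
\item Split on $\{\bar L^2>u\}$ and take $u=\sqrt{2Ra}$. This gives $(C+2)\exp\bigl(-\sqrt{a/(2R)}\bigr)$: the stated shape, with $R$ to the first power.
\end{itemize}

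\textbf{The printed constant cannot be matched.} Do not try to reproduce the printed $2\sqrt{2e^3R}$; no argument can. Replace $(x,y,\Lambda)$ by $(sx,sy,\Lambda_s)$, where $\Lambda_s$ is the law of $w/s$.
\begin{itemize}
\item Both kernels depend only on $W^{(p)\top}x_t$, $W^{(p)\top}y_t$ and $P$, so the left-hand side is unchanged.
\item $\Lambda_s$ satisfies the Bernstein condition with $(S/s^2,R/s^2)$.
\item Then $\sqrt{R}\,\|x\|_{1\text{-var}}\|y\|_{1\text{-var}}$ scales like $s$, while the prefactor grows only like $s^{-2d}$. So the printed right-hand side tends to $0$ as $s\to0$.
\item The left-hand side stays a fixed positive number for non-degenerate $x,y$, because given $W$ the estimator is a non-constant polynomial in Gaussians.
\end{itemize}
So the bound as printed is false, presumably a transcription slip. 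What your argument proves, once completed, is the scale-invariant version with $R\,\|x\|_{1\text{-var}}\|y\|_{1\text{-var}}$ in the denominator. Its numerical constants need not coincide with the quoted ones.
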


\noindent Finally, computing $\check\varphi_{\mathrm{Sig}}^{\le L}(x)$ for $N$ sequences of length $\ell$ costs
\[
O\bigl(N\,\ell\,\tilde d\, (L\,d + \tilde d)\bigr),
\]
i.e.\ \emph{linear} in the sequence length and RF dimension except for a mild quadratic term in $\tilde d$.

In practice, the RFSF‐DP and RFSF‐TRP variants provide excellent trade‐offs between feature dimensionality and approximation quality, allowing signature‐kernel methods to scale to large datasets with theoretical guarantees.
\appsection{Background on Submodularity} \label{App:submodular}

In this section, we aim to provide a background on submodular functions.

\begin{definition}[Subset Function] 
    Given a ground set \(\mathcal{U}\), a \emph{subset function} is a function 
    \[
        f : 2^{\mathcal{U}} \to \mathbb{R},
    \]
    that maps each subset of \(\mathcal{U}\) to a real value.
\end{definition}

\begin{definition} [Monotone Non-decreasing Function]
    A (subset) function is \emph{monotone non-decreasing} if 
    \[
        \forall A \subseteq B \subseteq \mathcal{U} \; : \; f(A) \leq f(B).
    \]
\end{definition}

\begin{definition} [Submodular Function] \label{def:submodular}
    A (subset) function \(f : 2^{\mathcal{U}} \to \mathbb{R}\) is \emph{submodular} if
    \[
        f(A+x)-f(A) \geq f(B+x)-f(B), \quad \forall A \subseteq B \subseteq \mathcal{U}, \; x \in \mathcal{U} \backslash B.
    \]
\end{definition}
\appsection{Algorithms for computing the maximum \(m\)-subset} \label{app:algos}

Using the same notation as in Sections~\ref{sec:dataset_diversity} and~\ref{sec:dataset_curation}, 
the maximum coverage problem in our setting can be formulated as the following integer program:
\begin{equation*}
\begin{aligned}
\text{maximize} \quad   & f(D) \\
\text{subject to} \quad & D \subseteq \mathcal{D}, \\
                        & |D| = m \text{ and } m \leq |\mathcal D|
\end{aligned}
\end{equation*}
where consider \(f(\cdot) = S(\cdot)\) or \(f(\cdot) = \Delta(\cdot)\).

In practice, it is often convenient to use the regularized objective
\begin{multline*}
    \Delta'(D)
    := \log\det\!\bigl(K^{\text{sig}} + \mu I_{m\times m}\bigr),
    \\
    K^{\text{sig}}_{i,j} = k^{\text{sig}}(d_i,d_j), \; d_i,d_j \in D, \quad \mu \in \mathbb{R}_{>0},
\end{multline*}
instead of \(\Delta\). This modification does not change the solution of the optimization problem. 
However, it is advantageous in two respects: \(\Delta'\) is more numerically stable, and \(\Delta'\) is submodular (see Definition~\ref{def:submodular}).

In general, integer programming problems are NP-hard. Consequently, computing an exact solution is feasible only for very small datasets. In such cases, one may employ branch-and-bound methods to solve the problem exactly~\citep{integer_combinatorial_optimization, integer_programing}. For larger instances, integer programs can often be approximated in polynomial time using various algorithmic techniques (see, e.g.,~\citep{approximation_algorithms}). In this work, we focus on a set of simple and easy-to-implement algorithms.

\subsection{Greedy Local Search}
Given the ground set \(\mathcal D\) and a objective \(f:2^V \to \mathbb{R}\) and cardinality bound \(m\), we first greedily build a size-\(m\) set \(X\) by repeatedly adding the element with maximum marginal gain. Then we optionally apply local search: while there exists a single-element swap that improves \(f\), we perform it.

\begin{algorithm}[h]
  \caption{Greedy Local Search (cardinality \(m\))}
  \begin{algorithmic}[1]
    \State \textbf{Input:} ground set \(\mathcal{D}\), objective \(f(\cdot)\), integer \(m\)
    \State \(X \gets \emptyset\)
    \Comment{greedy initialization}
    \For{\(i = 1\) to \(m\)}
      \State choose \(e \in \mathcal{D} \setminus X\) maximizing 
             \(G(e \mid X) = f(X \cup \{e\}) - f(X)\)
      \If{\(G(e \mid X) \le 0\)}
        \State \textbf{break}
      \EndIf
      \State \(X \gets X \cup \{e\}\)
    \EndFor
    \Comment{1-swap local search}
    \While{there exist \(e \in X\), \(e' \in \mathcal{D} \setminus X\) s.t.
           \(f(X \setminus \{e\} \cup \{e'\}) > f(X)\)}
      \State \(X \gets X \setminus \{e\} \cup \{e'\}\)
    \EndWhile
    \State \textbf{return} \(X\)
  \end{algorithmic}
\end{algorithm}

\paragraph{Approximation guarantee}
For a nonnegative, monotone, submodular function \(f\) under a simple cardinality constraint \(|X| \le m\), the classical greedy algorithm \citep{local_greedy_sreach, local_gready_sreach2} achieves a \((1 - 1/e)\)-approximation, and any subsequent local search that only performs improving moves preserves this guarantee. If \(f\) is non-submodular this algorithm provides no guarantee, however in practice this will preform fairly well.

\subsection{Stochastic Greedy}
Given a ground set \(\mathcal{D}\) of size \(n\), a objective function \(f\) and
a cardinality constraint \(m\), the stochastic greedy algorithm maintains a
solution set \(X\). At each of \(m\) iterations it samples a random subset of
the remaining elements of size
\(\lceil (n/m)\log(1/\varepsilon) \rceil\), evaluates marginal gains only on this
sample, and adds the sampled element with the largest marginal gain.

\begin{algorithm}[h]
  \caption{Stochastic Greedy (cardinality \(m\))}
  \begin{algorithmic}[1]
    \State \textbf{Input:} ground set \(\mathcal{D}\), objective \(f(\cdot)\), integers \(n = |\mathcal{D}|\), \(m\), parameter \(\varepsilon \in (0,1)\)
    \State \(X \gets \emptyset\)
    \State \(X_{\text{val}} \gets f(X)\)
    \State \(r \gets \lceil (n/m)\log(1/\varepsilon) \rceil\)
    \For{\(i = 1\) to \(m\)}
      \State \(A \gets \mathcal{D} \setminus X\) \Comment{remaining elements}
      \If{\(A = \emptyset\)} \State \textbf{break} \EndIf
      \State sample \(R \subseteq A\) uniformly without replacement with
             \(|R| = \min\{|A|, r\}\)
      \State \(e^\star \gets \arg\max_{e \in R} \bigl(f(X \cup \{e\}) - X_{\text{val}}\bigr)\)
      \State \(X \gets X \cup \{e^\star\}\)
      \State \(X_{\text{val}} \gets f(X)\)
    \EndFor
    \State \textbf{return} \(X\)
  \end{algorithmic}
\end{algorithm}

\paragraph{Approximation guarantee.}
Let \(f\) be nonnegative, monotone, and submodular. For any \(\varepsilon \in (0,1)\), the stochastic greedy algorithm above, with \(r = \lceil (n/m)\log(1/\varepsilon) \rceil\), returns a random set \(X\) such that \(\mathbb{E}[f(X)] \ge (1 - 1/e - \varepsilon)\,f(X^\star)\), where \(X^\star\) is an optimal solution of size at most \(m\) \citep{stochastic_greedy}. The expected number of function evaluations is \(O\bigl(n \log(1/\varepsilon)\bigr)\), independent of \(m\).

\subsection{Sampling an \(m\)-DPP on \(\mathcal{D}\).}

Another principled way to select a diverse subset of elements from the ground set
\(\mathcal{D}\) is via determinantal point processes (DPPs) \citep{dpp}.\footnote{We choose only provide a very brief introduction, DPPs in general require a lot of mathematical machinery and background that is beyond the scope of this article, hence we strongly recommend the interested reader to consult \citep{dpp}.} In an L-ensemble DPP
with kernel \(L \propto K^{\text{sig}} + \mu I\), the probability of a subset
\(X \subseteq \mathcal{D}\) is proportional to \(\det(L_{X,X})\), so that
maximizing \(\Delta(X)\) (or \(\Delta'(X)\)) is exactly the maximum a posteriori
(MAP) inference problem for this DPP. We next describe the corresponding DPP-based
algorithm for selecting a diverse subset of size \(m\).

Let \(\mathcal{D} = \{d_1,\dots,d_N\}\) be the ground set and
\(\widetilde{K}^{\text{sig}} \in \mathbb{R}^{N\times N}\) the full similarity matrix,
\[
  \widetilde{K}^{\text{sig}}_{i,j} = k^{\text{sig}}(d_i,d_j),
  \qquad d_i,d_j \in \mathcal{D}.
\]
We define the L-ensemble kernel
\[
  L := \widetilde{K}^{\text{sig}} + \mu I_{N\times N}, \qquad \mu \in \mathbb{R}_{>0},
\]
so that for any subset \(X \subseteq \mathcal{D}\) with \(|X| = m\),
\(\Delta'(X) = \log \det\!\bigl(L_{X,X}\bigr)\) is (up to an additive constant)
the log-probability under the corresponding \(m\)-DPP.
The following algorithm samples a random subset \(X\) of size \(m\) from this \(m\)-DPP.

\begin{algorithm}[h]
  \caption{Sampling an \(m\)-DPP on \(\mathcal{D}\)}
  \label{alg:k-dpp-sampling}
  \begin{algorithmic}[1]
    \Require Ground set \(\mathcal{D} = \{d_1,\dots,d_N\}\),
             similarity kernel \(k^{\text{sig}}\),
             regularizer \(\mu > 0\),
             target size \(m\).
    \State Form \(\widetilde{K}^{\text{sig}} \in \mathbb{R}^{N\times N}\) with
           \(\widetilde{K}^{\text{sig}}_{ij} = k^{\text{sig}}(d_i,d_j)\),
           and set \(L \gets \widetilde{K}^{\text{sig}} + \mu I_{N\times N}\).
    \State Compute eigendecomposition
           \(L = \sum_{n=1}^N \lambda_n v_n v_n^\top\),
           with eigenvalues \(\lambda_n \ge 0\) and orthonormal eigenvectors \(v_n \in \mathbb{R}^N\).

    \medskip
    \Statex \textbf{Phase 1: choose \(m\) eigenvectors.}
    \State Initialize elementary symmetric polynomials \(e_n^\ell\) for
           \(n=0,\dots,N\), \(\ell = 0,\dots,m\):
           \begin{align*}
             & e_0^0 \gets 1,\qquad
               e_n^0 \gets 1 \ \text{for } n = 1,\dots,N,\\
             & e_0^\ell \gets 0 \ \text{for } \ell = 1,\dots,m.
           \end{align*}
    \For{\(\ell = 1,\dots,m\)}
      \For{\(n = 1,\dots,N\)}
        \State \(e_n^\ell \gets e_{n-1}^\ell + \lambda_n e_{n-1}^{\ell-1}\)
      \EndFor
    \EndFor
    \State \(J \gets \emptyset,\quad \ell \gets m\).
    \For{\(n = N, N-1, \dots, 1\)}
      \If{\(\ell = 0\)} \State \textbf{break} \EndIf
      \State Draw \(u \sim \mathrm{Uniform}[0,1]\).
      \If{\(u \le \lambda_n e_{n-1}^{\ell-1} / e_n^\ell\)}
        \State \(J \gets J \cup \{n\}\), \quad \(\ell \gets \ell - 1\).
      \EndIf
    \EndFor
    \State Let \(V \gets \{v_n : n \in J\}\) be the selected eigenvectors.

    \medskip
    \Statex \textbf{Phase 2: sample items given \(V\).}
    \State \(Y \gets \emptyset\). \Comment{\(Y \subseteq \{1,\dots,N\}\) will index items in \(\mathcal{D}\)}
    \While{\(V \neq \emptyset\)}
      \State For each \(i \in \{1,\dots,N\}\), set
        \[
          \Pr(i) \propto \sum_{v \in V} (v_i)^2
        \]
        and sample \(i\) according to \(\Pr(i)\).
      \State \(Y \gets Y \cup \{i\}\).
      \State Update \(V\) to an orthonormal basis for the subspace of \(\mathrm{span}(V)\)
             orthogonal to the standard basis vector \(e_i\).
    \EndWhile
    \State \Return \(X = \{d_i : i \in Y\} \subseteq \mathcal{D}\) with \(|X| = m\).
  \end{algorithmic}
\end{algorithm}

\appsection{Implementation Details} \label{app:exps_deatails}

\subsection{Robomimic Tasks}
For all Robomimic's tasks we use the official implementation provided by Robomimic~\cite{robomimic}. We now provide a brief description of the tasks, however we recommend the interested reader to consult \cite{robomimic} for the complete details. 
\subsubsection{Task Description}
\begin{itemize}
    \item \textbf{Can.} The robot must move a coke can from a large bin into a smaller target bin. This task is slightly more challenging than Lift, as grasping the can is more difficult than grasping the cube, and the robot must also accurately place the can into the bin. At the start of each episode, the can’s pose is randomized within the left bin, with its z-rotation sampled randomly.
    \item \textbf{Square.} The robot must pick up a square nut and place it onto a rod. This task is substantially more difficult than Lift and Pick Place Can because it requires precise manipulation to both grasp the nut and insert it onto the rod. At the start of each episode, the square nut’s pose is randomized within a square region on the table, including a randomly sampled z-rotation.
    \item \textbf{Transport.} Two robot arms must collaboratively transfer a hammer from a closed container on one shelf to a target bin on another shelf. One arm is responsible for retrieving the hammer from the container, while the other clears the target bin by moving a piece of trash into a nearby receptacle. Finally, one arm passes the hammer to the other, which then places it in the target bin. At the start of each episode, the positions of all bins, the lid, the trash cube, and the hammer are randomized within small square regions. Additionally, the z-rotations of the trash cube and the hammer are randomized over ranges of 108 and 60 degrees, respectively.
\end{itemize}

\begin{figure}[h]
  \centering
  \begin{subfigure}[t]{0.31\linewidth}
    \centering
    \includegraphics[width=\linewidth]{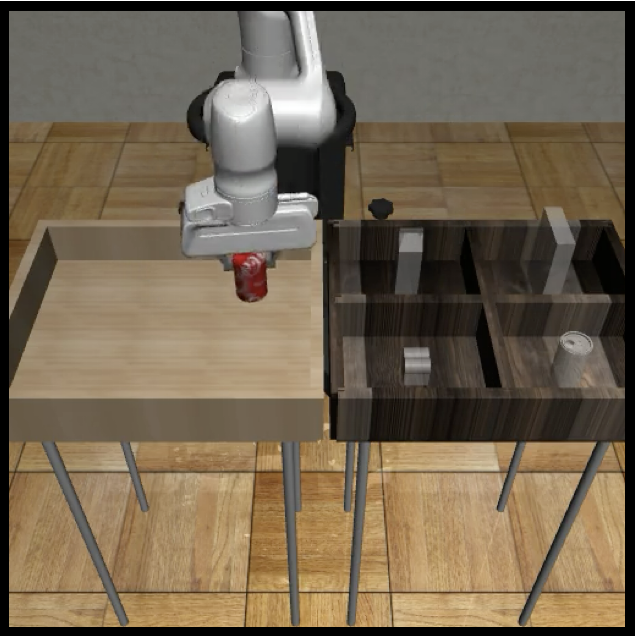}
    \caption{Can Task}
  \end{subfigure}      
  \begin{subfigure}[t]{0.31\linewidth}
    \centering
    \includegraphics[width=\linewidth]{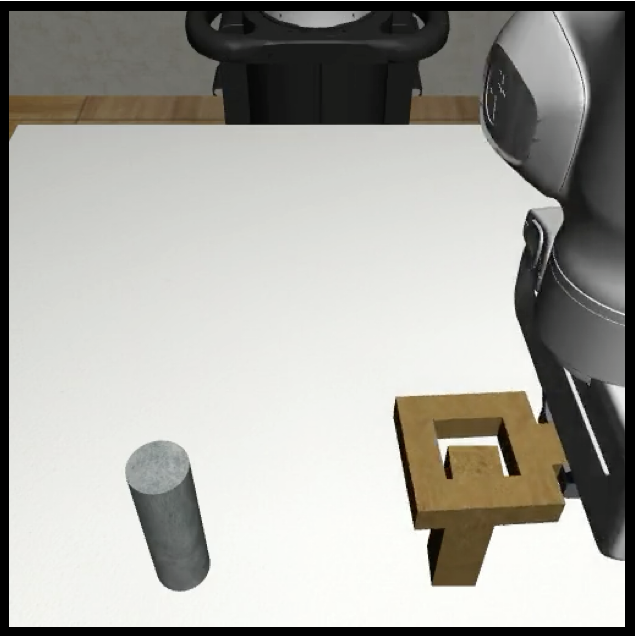}
    \caption{Square Task}
  \end{subfigure}      
  \begin{subfigure}[t]{0.31\linewidth}
    \centering
    \includegraphics[width=\linewidth]{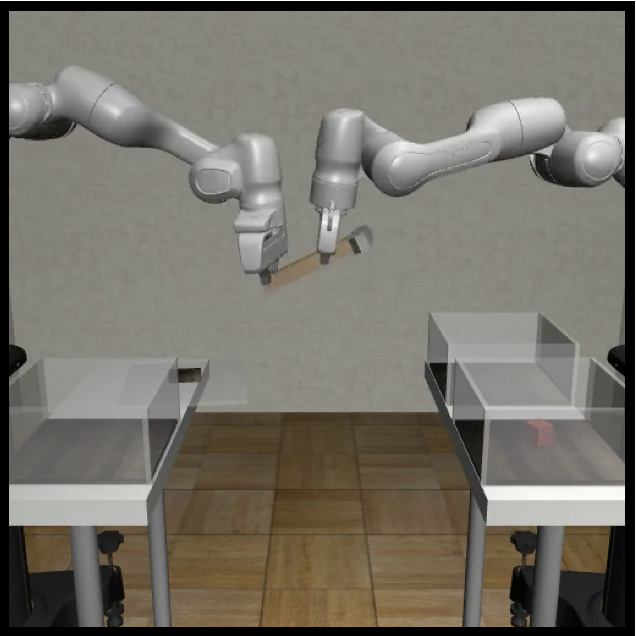}
    \caption{Transport Task}
  \end{subfigure}
  \caption{Scene visualization of the 3 Robomimic tasks.}
  \label{fig:robomic_task_viz}
\end{figure}

\subsubsection{Dataset Description}
The PH datasets contain 200 demonstrations collected by a single, experienced teleoperator, whereas the MH datasets contain 300 demonstrations collected by 6 teleoperators of varying proficiency, each contributing 50 demonstrations. The 6 teleoperators are partitioned into a “better” group of 2 experienced operators, an “okay” group of 2 adequately skilled operators, and a “worse” group of 2 inexperienced operators.

In our experiments in Figure~\ref{fig:robomimic_results}, we used 20 demonstrations for validation and 180 for training when using the PH datasets, and 30 demonstrations for validation and 270 for training when using the MH datasets.

\subsubsection{Hyperparameters and model architecture}
The complete implementation details and configuration files are available in our official repository. The key details are summarized in Table~\ref{tab:robomimic_imp_deatails}.

\begin{sidewaystable*}[p]
\centering
\small
\begin{tabular}{llccc}
\toprule
\textbf{Category} & \textbf{Hyperparameter} & \textbf{can} & \textbf{square} & \textbf{transport} \\
\midrule
Policy & Policy type & RNN-based behavior cloning (BC-RNN) & Diffusion policy based on \citet{diffusion_policy} & Diffusion policy based on \citet{diffusion_policy} \\
Policy & Batch size & 16 & 16 & 16 \\
Policy & Number of steps & 100{,}000 & 100{,}000 & 200{,}000 \\
\midrule
Optimizer \& LR & Optimizer & Adam & AdamW & AdamW \\
Optimizer \& LR & Base learning rate & 1e-4 & 1e-4 & 1e-4 \\
Optimizer \& LR & Weight decay (L2) & 0.0 & 1e-6 & 1e-6 \\
Optimizer \& LR & LR scheduler type & multistep & cosine & cosine \\
Optimizer \& LR & Scheduler cycles & \textemdash & 0.5 & 0.5 \\
Optimizer \& LR & Scheduler epochs & none (constant LR, empty schedule) & \textemdash & \textemdash \\
Optimizer \& LR & Warmup steps & 0 (no warmup) & 500 & 500 \\
\midrule
Diffusion (DDPM) & UNet enabled & \textemdash & true & true \\
Diffusion (DDPM) & UNet down dims & \textemdash & {[}256, 512, 1024{]} & {[}256, 512, 1024{]} \\
Diffusion (DDPM) & UNet kernel size & \textemdash & 5 & 5 \\
Diffusion (DDPM) & UNet group norm groups & \textemdash & 8 & 8 \\
Diffusion (DDPM) & Diffusion step embed dim & \textemdash & 256 & 256 \\
Diffusion (DDPM) & EMA enabled & \textemdash & true & true \\
Diffusion (DDPM) & EMA power & \textemdash & 0.75 & 0.75 \\
Diffusion (DDPM) & DDPM enabled & \textemdash & true & true \\
Diffusion (DDPM) & DDPM train timesteps & \textemdash & 100 & 100 \\
Diffusion (DDPM) & DDPM inference timesteps & \textemdash & 100 & 100 \\
Diffusion (DDPM) & Beta schedule & \textemdash & squaredcos\_cap\_v2 & squaredcos\_cap\_v2 \\
Diffusion (DDPM) & Prediction type & \textemdash & epsilon (noise prediction) & epsilon (noise prediction) \\
Diffusion (DDPM) & Clip sample & \textemdash & true & true \\
Diffusion (DDPM) & DDIM enabled & \textemdash & false & false \\
\midrule
RNN Policy (BC-RNN) & RNN enabled & true & \textemdash & \textemdash \\
RNN Policy (BC-RNN) & RNN type & LSTM & \textemdash & \textemdash \\
RNN Policy (BC-RNN) & RNN hidden dim & 1000 & \textemdash & \textemdash \\
RNN Policy (BC-RNN) & RNN num layers & 2 & \textemdash & \textemdash \\
RNN Policy (BC-RNN) & RNN horizon & 10 & \textemdash & \textemdash \\
RNN Policy (BC-RNN) & RNN bidirectional & false & \textemdash & \textemdash \\
RNN Policy (BC-RNN) & Output distribution & GMM (mixture of Gaussians) & \textemdash & \textemdash \\
RNN Policy (BC-RNN) & GMM enabled & true & \textemdash & \textemdash \\
RNN Policy (BC-RNN) & GMM num modes & 5 & \textemdash & \textemdash \\
RNN Policy (BC-RNN) & GMM min std & 1e-4 & \textemdash & \textemdash \\
RNN Policy (BC-RNN) & GMM std activation & softplus & \textemdash & \textemdash \\
RNN Policy (BC-RNN) & GMM low-noise eval & true & \textemdash & \textemdash \\
RNN Policy (BC-RNN) & Gaussian head & disabled & \textemdash & \textemdash \\
RNN Policy (BC-RNN) & VAE & disabled & \textemdash & \textemdash \\
RNN Policy (BC-RNN) & Transformer & disabled & \textemdash & \textemdash \\
\midrule
Rollout / Evaluation & Rollout episodes & 50 & 50 & 50 \\
Rollout / Evaluation & Rollout horizon & 400 & 400 & 1100 \\
Rollout / Evaluation & Rollout and checkpoint rate & 5000 steps & 5000 steps & 10000 steps \\
\bottomrule
\end{tabular}
\caption{Hyperparameters and model settings for the Robomimic tasks.}
\label{tab:robomimic_imp_deatails}
\end{sidewaystable*}

\subsection{Metaworld Tasks}
For all Metaworld~\citep{metaworld, metaworld+} tasks, use the implementation provided by \citet{lerobot}. We now
provide a brief description of the tasks, however we recommend the interested reader to consult \cite{metaworld, metaworld+} for the complete details.

\subsubsection{Task Description}
\begin{itemize}
    \item \textbf{Door open} (door-open-v3). The robot must open a door with a revolving joint. The door positions are randomized.
    \item \textbf{Stick push} (stick-push-v3). The robot must grasp a stick and push a box using the stick. The stick positions are randomized.
    \item \textbf{Shelf place} (shelf-place-v3). The robot must pick and place a puck onto a shelf. The puck and shelf positions are randomized.
\end{itemize}

\begin{figure}[h]
  \centering

  \begin{subfigure}[t]{0.31\linewidth}
    \centering
    \fbox{\includegraphics[width=\linewidth]{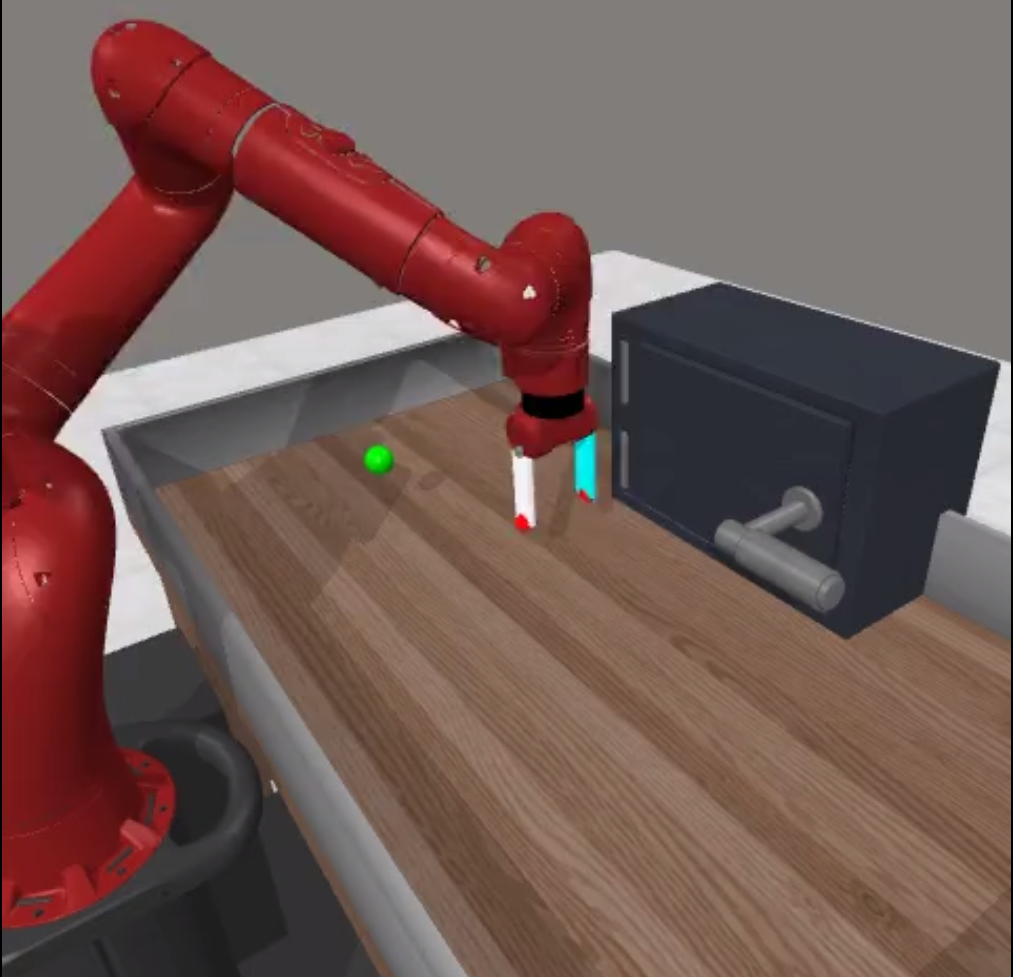}}
    \caption{Door open}
  \end{subfigure}\hspace{0.01\linewidth}
  \begin{subfigure}[t]{0.31\linewidth}
    \centering
    \fbox{\includegraphics[width=\linewidth]{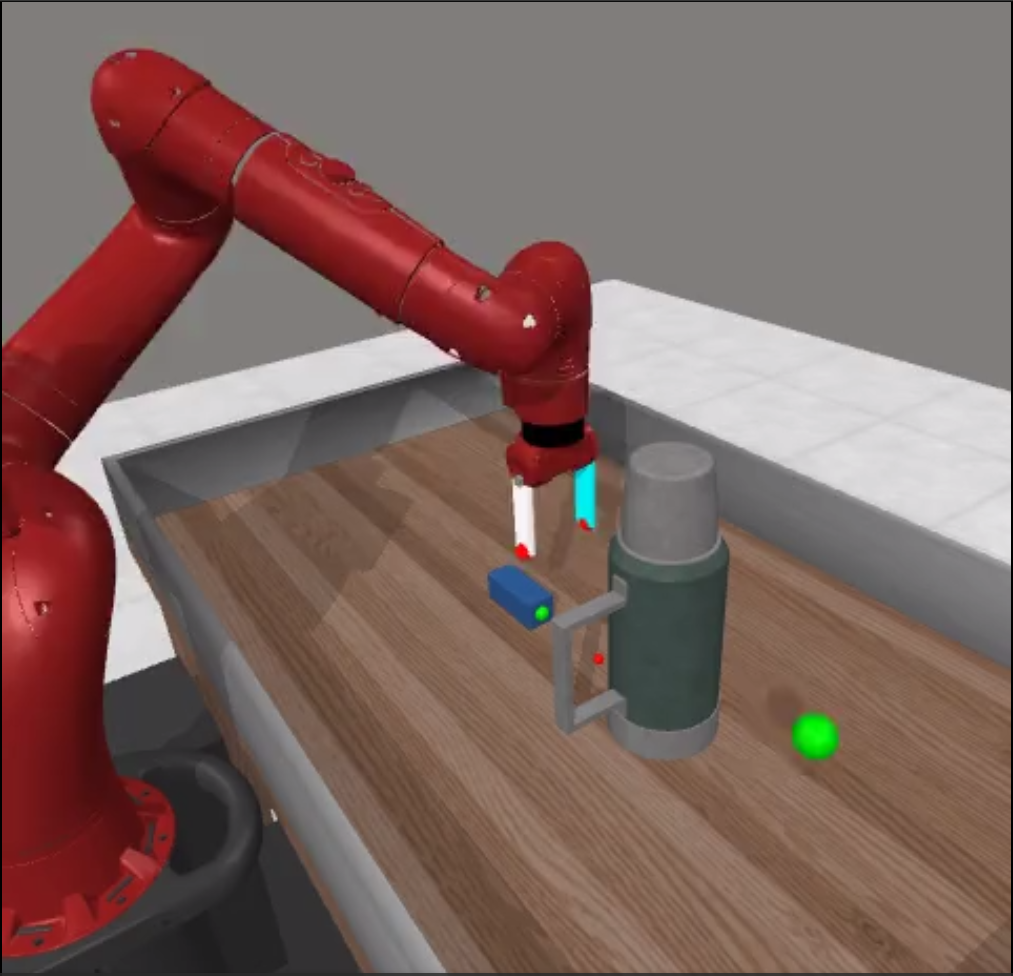}}
    \caption{Stick push}
  \end{subfigure}\hspace{0.01\linewidth}
  \begin{subfigure}[t]{0.31\linewidth}
    \centering
    \fbox{\includegraphics[width=\linewidth]{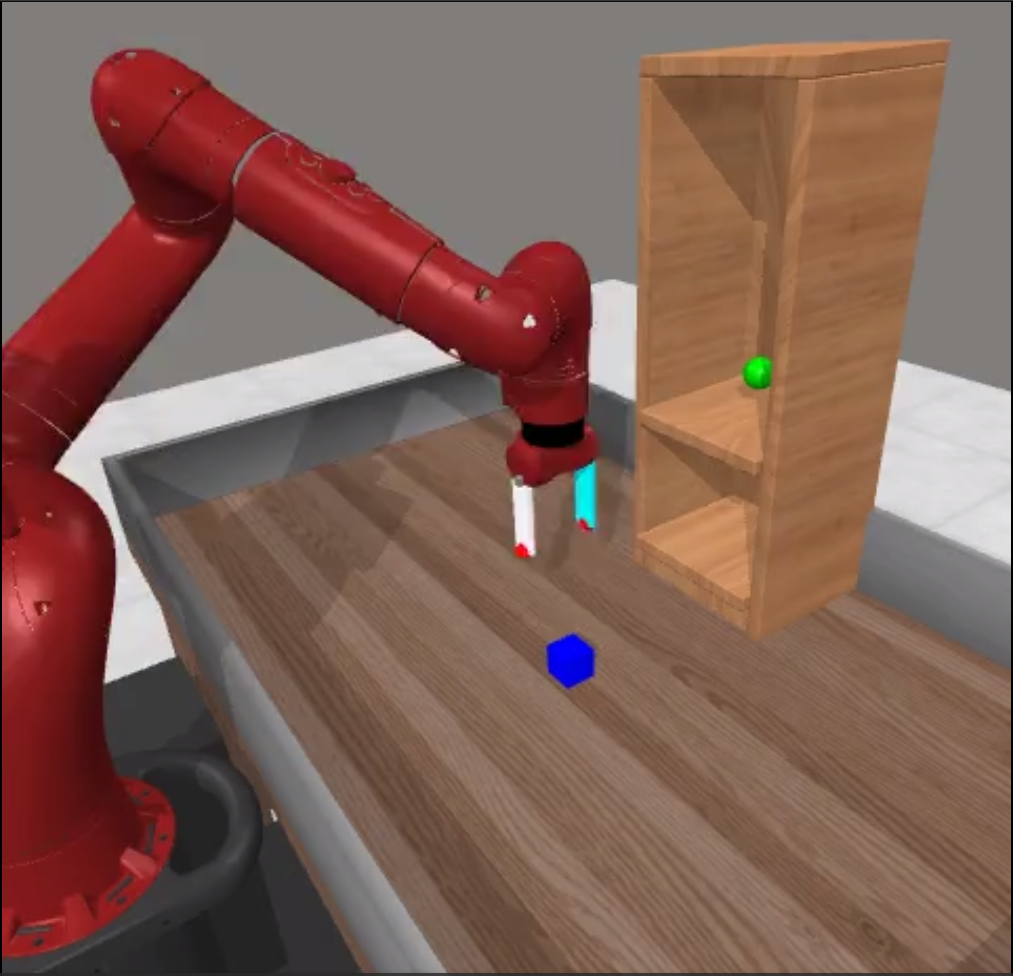}}
    \caption{Shelf place}
  \end{subfigure}

  \caption{Scene visualization of the 3 Metaworld tasks.}
  \label{fig:metaworld_task_viz}
\end{figure}

All goal positions are randomized as well.

\subsubsection{Dataset Description}
The datasets contain 50 demonstrations; however \citet{metaworld, metaworld+} do not disclose how the data was collected.

\subsubsection{Hyperparameters and model architecture}
The complete implementation details and configuration files are available in our official repository. The key details are summarized in Table~\ref{tab:metaworld_imp_deatails}. All tasks use the same configuration.

\begin{table*}
\centering
\small
\begin{tabular}{llp{7cm}}
\toprule
\textbf{Category} & \textbf{Hyperparameter} & \textbf{Value} \\
\midrule
Model + Architecture 
& Model type & smolvla~\citep{smolvla} \\
& Vision--language backbone & \texttt{HuggingFaceTB/SmolVLM2-500M-Video-Instruct} \\
& VLM transformer layers & $16$ transformer layers from the VLM stack \\
& Number of expert layers & $0$ (expert interleaved into VLM stack; only expert is trained) \\
& Expert width multiplier & $0.75$ (expert MLP width vs.\ VLM width) \\
& Attention mode & cross-attn w/ periodic self-attn \\
& Self-attention frequency & Self-attention every $2$ layers \\
& KV cache for inference & Enabled  \\
\midrule
Training / Finetuning behaviour
& Vision encoder frozen & true \\
& Train expert head only & true (only expert + small heads get gradients) \\
& Train state projection MLP & true \\
& Load pretrained VLM weights & true (loads pretrained SmolVLM2 weights) \\
& Image special tokens & false \\
\midrule
Optimizer \& LR schedule
& Optimizer & AdamW (from LeRobot training utilities) \\
& Base learning rate & $1\times10^{-4}$ peak learning rate \\
& Adam betas & $(0.9,\ 0.95)$ \\
& Adam epsilon & $1\times10^{-8}$ \\
& Weight decay & $1\times10^{-10}$ (very light weight decay) \\
& Gradient clipping norm & $10$ (global grad clipping) \\
& LR scheduler type & Cosine decay with warmup \\
& Warmup steps & $1000$ warmup steps \\
& Decay steps & $30000$ cosine decay steps \\
& Minimum learning rate & $2.5\times10^{-6}$ (minimum LR plateau) \\
\midrule
Overrides from training command
& Batch size & 8 \\
& Number of training steps & 40000 \\
\midrule
Rollout / Evaluation
& Rollout episodes & 50 \\
& Rollout horizon & 400 \\
& Rollout and checkpoint rate & 2000 steps \\
\bottomrule
\end{tabular}
\caption{SmolVLA hyperparameters and training configuration for the Metworld tasks.}
\label{tab:metaworld_imp_deatails}
\end{table*}

\subsection{Real Tasks}

All real tasks we use the official lerobot library~\citep{lerobot}. We now provide a brief description of the tasks.

\subsubsection{Task Description}
\begin{itemize}
    \item \textbf{Drawer open.} The robot must open a drawer by pulling a handle. The position of the drawer is fixed for all episodes; however, 5 different lighting conditions were used.
    \item \textbf{Pick and place marker.} The robot must pick up a marker and then drop into a pen holder. The position of the pen holder and marker are independently sampled from a fixed set of 7 locations (2 novel location were used that were not preset during the recording of the dataset).
    \item \textbf{Organize objects.} If a marker/pen is placed on the table, the robot must place it on a
    book, else if a charging brick is placed on the table, the robot must (1) first open a drawer, (2) then place
    the brick in the drawer, (3) and finally then close the drawer. The position of the drawer is fixed for all episodes. The position of the charging brick, marker and book are sampled from a fixed set of 5 random positions. There are no episodes where both the marker and the charging brick are placed on the table.
    \item \textbf{Mug Drag.} The robot must grasp a hook and drag a mug to a square goal region by using the hook as an intermediate tool. The position of the hook is randomized and the position of the mug is always to the left of the goal, however the distance to the goal is randomized.
\end{itemize}

\begin{figure}[h]
  \centering
  \begin{subfigure}[t]{0.31\linewidth}
    \centering
    \fbox{\includegraphics[width=\linewidth]{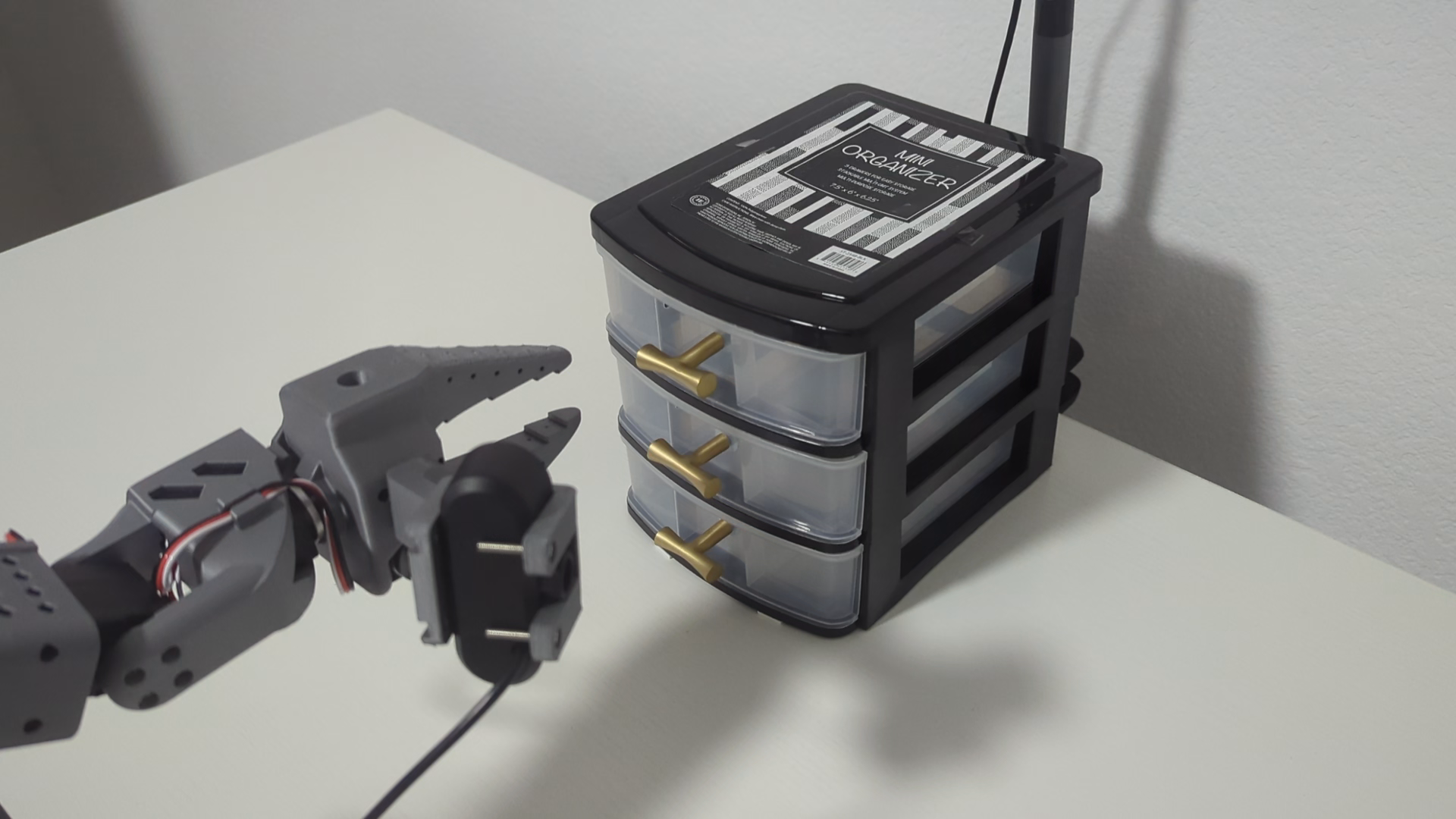}}
    \caption{Drawer open}
  \end{subfigure}\hspace{0.01\linewidth}
  \begin{subfigure}[t]{0.31\linewidth}
    \centering
    \fbox{\includegraphics[width=\linewidth]{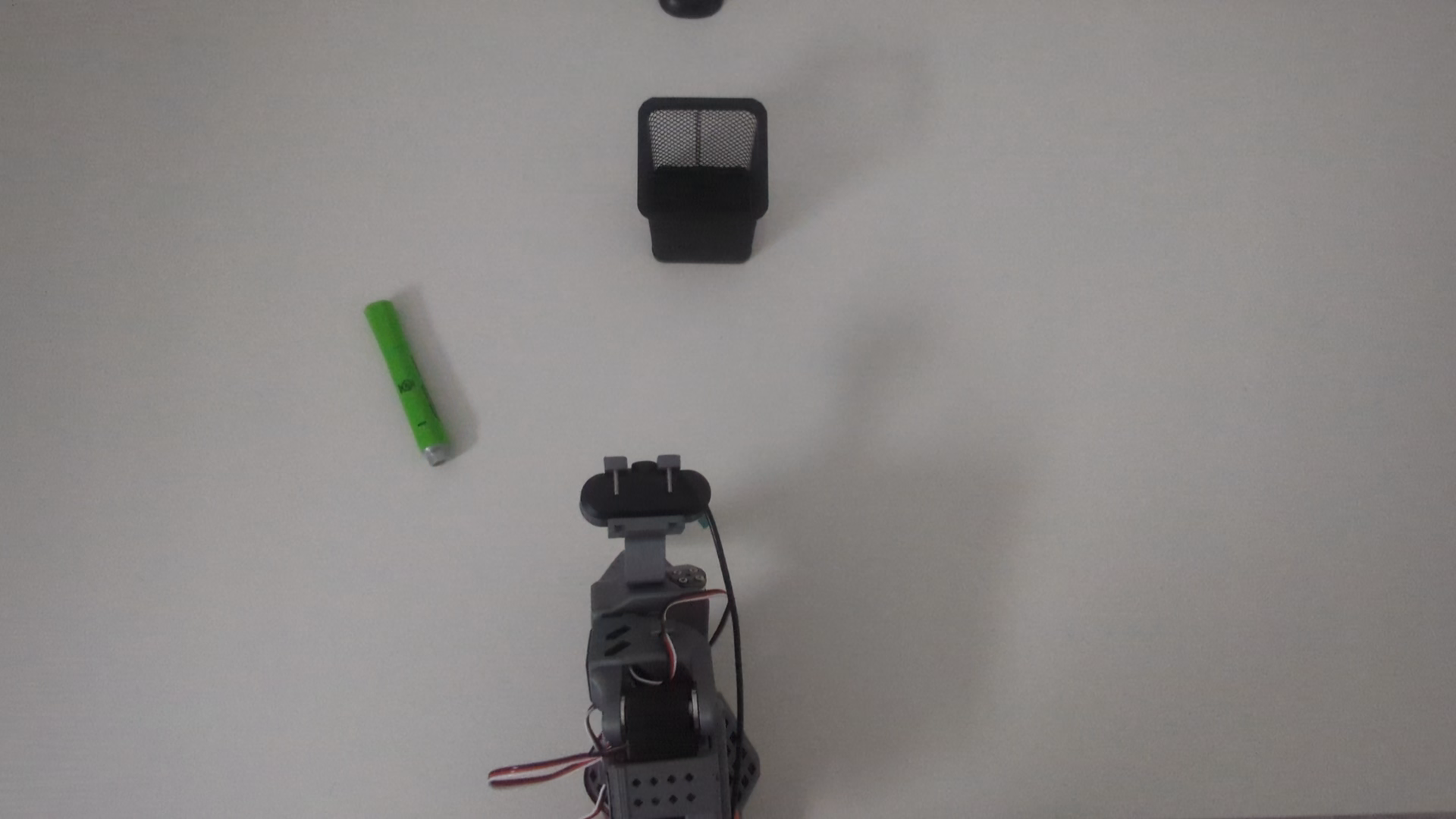}}
    \caption{Pick and place marker}
  \end{subfigure}
  \begin{subfigure}[t]{0.31\linewidth}
    \centering
    \fbox{\includegraphics[width=\linewidth]{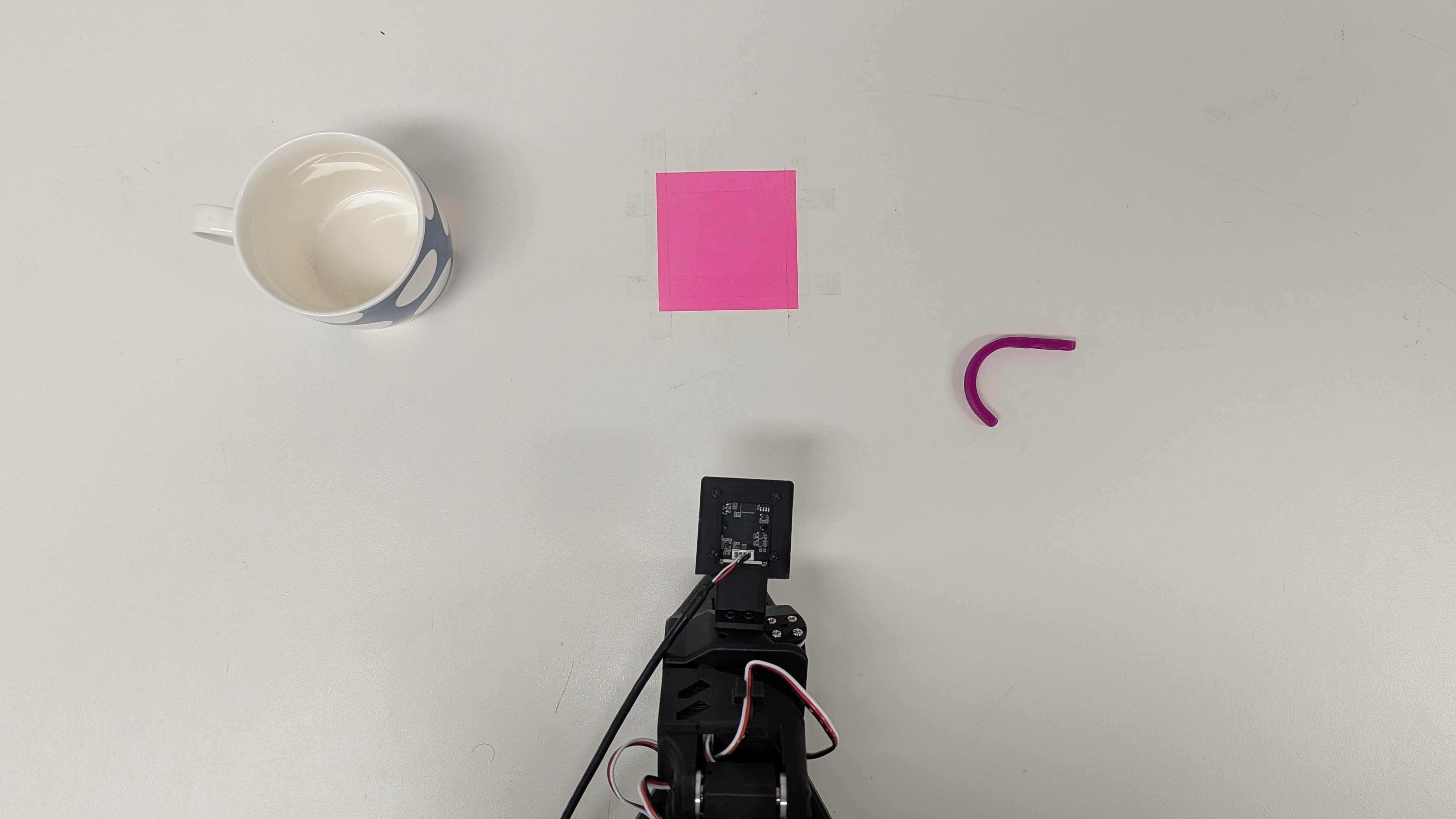}}
    \caption{Mug Drag}
  \end{subfigure}
  \caption{Scene visualization of the drawer open task, pick and place marker task and mug drag task.}
  \label{fig:metaworld_task_viz}
\end{figure}

\begin{figure}[h]
  \centering
  \begin{subfigure}[t]{0.31\linewidth}
    \centering
    \fbox{\includegraphics[width=\linewidth]{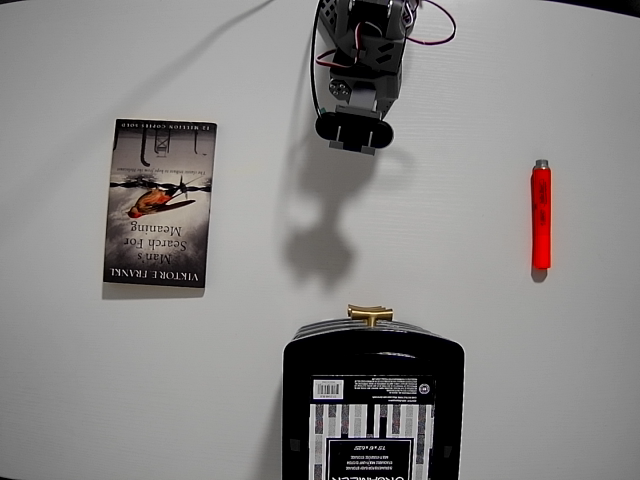}}
    \caption{Initial positions of objects in an episode with a marker.}
  \end{subfigure}\hspace{0.01\linewidth}
  \begin{subfigure}[t]{0.31\linewidth}
    \centering
    \fbox{\includegraphics[width=\linewidth]{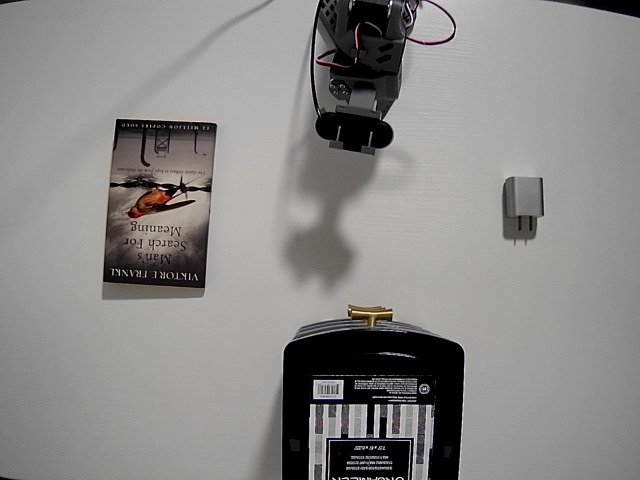}}
    \caption{Initial positions of objects in an episode with a charging brick.}
  \end{subfigure}
  \caption{Scene visualization of the organize objects tasks.}
  \label{fig:metaworld_task_viz}
\end{figure}

\subsubsection{Dataset Description}
The drawer open and Pick and place marker datasets contain 50 demonstrations, whereas the organize objects and mug drag datasets contains 100 demonstrations collected by a single experienced teleoperator. The pick and place marker and mug drag tasks use two cameras: one mounted on the robot’s wrist and another mounted above the workspace providing a top-down view. The drawer opening and object organization tasks use an additional camera with a diagonal side view. All cameras capture plain RGB images at 30 FPS with a resolution of 480 (height) by 640 (width) pixels.

\subsubsection{Hyperparameters and model architecture}
We use the same configuration and setup as in Table~\ref{tab:metaworld_imp_deatails}, with the only change being batch size set to 64, training steps set to 60000 steps and rollout rate every 20000 steps. 
\appsection{Additional Results} \label{app:add_results}
\subsection{Additional Results for Section~\ref{sec:results}}
Table~\ref{tab:full_results_metaworld} provides the complete and full results for metaworld experiments that were summarized in Figure~\ref{fig:metaworld_results}.

\begin{table*}
\centering
\begin{tabular}{l l c c c c c c c}
\toprule
Task & Method & Num. of Demos. & Seed 1 & Seed 2 & Seed 3 & Mean & Min & Max \\
\midrule
Door open   & FAKTUAL & 25 & 0.72 & 0.60 & 0.74 & 0.69 & 0.60 & 0.74 \\
            & random      & 25 & 0.62 & 0.52 & 0.66 & 0.60 & 0.52 & 0.66 \\
            & FAKTUAL & 35 & 0.66 & 0.62 & 0.74 & 0.67 & 0.62 & 0.74 \\
            & random      & 35 & 0.70 & 0.56 & 0.78 & 0.68 & 0.56 & 0.78 \\
            & FAKTUAL & 45 & 0.68 & 0.64 & 0.68 & 0.67 & 0.64 & 0.68 \\
            & random      & 45 & 0.72 & 0.50 & 0.86 & 0.69 & 0.50 & 0.86 \\
            & full dataset      & 50 & 0.68 & 0.56 & 0.76 & 0.67 & 0.56 & 0.76 \\
\midrule
Shelf place & FAKTUAL & 25 & 0.12 & 0.08 & 0.08 & 0.09 & 0.08 & 0.12 \\
            & random      & 25 & 0.08 & 0.04 & 0.08 & 0.07 & 0.04 & 0.08 \\
            & FAKTUAL & 35 & 0.12 & 0.08 & 0.10 & 0.10 & 0.08 & 0.12 \\
            & random      & 35 & 0.10 & 0.08 & 0.08 & 0.09 & 0.08 & 0.10 \\
            & FAKTUAL & 45 & 0.12 & 0.06 & 0.08 & 0.09 & 0.06 & 0.12 \\
            & random      & 45 & 0.10 & 0.04 & 0.10 & 0.08 & 0.04 & 0.10 \\
            & full dataset      & 50 & 0.06 & 0.04 & 0.08 & 0.06 & 0.04 & 0.08 \\
\midrule
Stick push  & FAKTUAL & 25 & 0.44 & 0.26 & 0.22 & 0.29 & 0.18 & 0.44 \\
            & random      & 25 & 0.42 & 0.16 & 0.18 & 0.27 & 0.16 & 0.42 \\
            & FAKTUAL & 35 & 0.46 & 0.26 & 0.20 & 0.30 & 0.18 & 0.46 \\
            & random      & 35 & 0.42 & 0.16 & 0.20 & 0.27 & 0.16 & 0.42 \\
            & FAKTUAL & 45 & 0.46 & 0.20 & 0.18 & 0.28 & 0.18 & 0.46 \\
            & random      & 45 & 0.42 & 0.18 & 0.22 & 0.27 & 0.18 & 0.42 \\
            & full dataset      & 50 & 0.46 & 0.16 & 0.16 & 0.26 & 0.16 & 0.46 \\
\bottomrule
\end{tabular}
\caption{Success rates for all metaworld manipulation tasks over 3 seeds. Mean, minimum, and maximum are computed across seeds.}
\label{tab:full_results_metaworld}
\end{table*}

Table~\ref{tab:full_robomimic_results} provides the complete and full results for robomimic experiments that were summarized in Figure~\ref{fig:robomimic_results}.

\begin{table*}
\centering
\begin{tabular}{l l c c c c c c c}
\toprule
Task & Method & Num.\ of\ Demos. & Seed 1 & Seed 2 & Seed 3 & Mean & Min & Max \\
\midrule
Can (MH) & FAKTUAL & 60 & 0.72 & 0.80 & 0.70 & 0.74 & 0.70 & 0.80 \\
 & random & 60 & 0.62 & 0.72 & 0.76 & 0.70 & 0.62 & 0.76 \\
 & FAKTUAL & 120 & 0.82 & 0.88 & 0.68 & 0.79 & 0.68 & 0.88 \\
 & random & 120 & 0.76 & 0.70 & 0.78 & 0.75 & 0.70 & 0.78 \\
 & FAKTUAL & 180 & 0.76 & 0.80 & 0.74 & 0.77 & 0.74 & 0.80 \\
 & random & 180 & 0.70 & 0.74 & 0.78 & 0.74 & 0.70 & 0.78 \\
 & FAKTUAL & 240 & 0.82 & 0.86 & 0.80 & 0.83 & 0.80 & 0.86 \\
 & random & 240 & 0.78 & 0.52 & 0.86 & 0.72 & 0.52 & 0.86 \\
 & full dataset & 280 & 0.80 & 0.74 & 0.84 & 0.79 & 0.74 & 0.84 \\
\midrule
Can (PH) & FAKTUAL & 30 & 0.76 & 0.82 & 0.70 & 0.76 & 0.70 & 0.82 \\
 & random & 30 & 0.58 & 0.66 & 0.66 & 0.63 & 0.58 & 0.66 \\
 & FAKTUAL & 60 & 0.86 & 0.90 & 0.76 & 0.84 & 0.76 & 0.90 \\
 & random & 60 & 0.76 & 0.78 & 0.70 & 0.75 & 0.70 & 0.78 \\
 & FAKTUAL & 90 & 0.84 & 0.92 & 0.86 & 0.87 & 0.84 & 0.92 \\
 & random & 90 & 0.90 & 0.82 & 0.80 & 0.84 & 0.80 & 0.90 \\
 & FAKTUAL & 120 & 0.98 & 0.98 & 0.90 & 0.95 & 0.90 & 0.98 \\
 & random & 120 & 0.88 & 0.82 & 0.88 & 0.86 & 0.82 & 0.88 \\
 & FAKTUAL & 150 & 0.94 & 0.88 & 0.94 & 0.92 & 0.88 & 0.94 \\
 & random & 150 & 0.80 & 0.90 & 0.72 & 0.81 & 0.72 & 0.90 \\
 & full dataset & 180 & 0.86 & 0.88 & 0.96 & 0.90 & 0.86 & 0.96 \\
\midrule
Square (MH) & FAKTUAL & 60 & 0.28 & 0.26 & 0.30 & 0.28 & 0.26 & 0.30 \\
 & random & 60 & 0.30 & 0.26 & 0.32 & 0.29 & 0.26 & 0.32 \\
 & FAKTUAL & 120 & 0.46 & 0.50 & 0.44 & 0.47 & 0.44 & 0.50 \\
 & random & 120 & 0.44 & 0.48 & 0.44 & 0.45 & 0.44 & 0.48 \\
 & FAKTUAL & 180 & 0.56 & 0.60 & 0.54 & 0.57 & 0.54 & 0.60 \\
 & random & 180 & 0.54 & 0.52 & 0.54 & 0.53 & 0.52 & 0.54 \\
 & FAKTUAL & 240 & 0.62 & 0.64 & 0.66 & 0.64 & 0.62 & 0.66 \\
 & random & 240 & 0.62 & 0.56 & 0.60 & 0.59 & 0.56 & 0.62 \\
 & full dataset & 280 & 0.60 & 0.56 & 0.62 & 0.59 & 0.56 & 0.62 \\
\midrule
Square (PH) & FAKTUAL & 30 & 0.20 & 0.24 & 0.20 & 0.21 & 0.20 & 0.24 \\
 & random & 30 & 0.22 & 0.22 & 0.30 & 0.25 & 0.22 & 0.30 \\
 & FAKTUAL & 60 & 0.48 & 0.48 & 0.46 & 0.47 & 0.46 & 0.48 \\
 & random & 60 & 0.48 & 0.58 & 0.58 & 0.55 & 0.48 & 0.58 \\
 & FAKTUAL & 90 & 0.64 & 0.66 & 0.70 & 0.67 & 0.64 & 0.70 \\
 & random & 90 & 0.64 & 0.64 & 0.54 & 0.61 & 0.54 & 0.64 \\
 & FAKTUAL & 120 & 0.76 & 0.76 & 0.72 & 0.75 & 0.72 & 0.76 \\
 & random & 120 & 0.74 & 0.72 & 0.72 & 0.73 & 0.72 & 0.74 \\
 & FAKTUAL & 150 & 0.78 & 0.78 & 0.80 & 0.79 & 0.78 & 0.80 \\
 & random & 150 & 0.76 & 0.74 & 0.76 & 0.75 & 0.74 & 0.76 \\
 & full dataset & 180 & 0.78 & 0.74 & 0.78 & 0.77 & 0.74 & 0.78 \\
\midrule
Transport (MH) & FAKTUAL & 60 & 0.08 & 0.08 & 0.08 & 0.08 & 0.08 & 0.08 \\
 & random & 60 & 0.14 & 0.04 & 0.08 & 0.09 & 0.04 & 0.14 \\
 & FAKTUAL & 120 & 0.14 & 0.10 & 0.10 & 0.11 & 0.10 & 0.14 \\
 & random & 120 & 0.12 & 0.10 & 0.06 & 0.09 & 0.06 & 0.12 \\
 & FAKTUAL & 180 & 0.16 & 0.14 & 0.12 & 0.14 & 0.12 & 0.16 \\
 & random & 180 & 0.16 & 0.16 & 0.16 & 0.16 & 0.16 & 0.16 \\
 & FAKTUAL & 240 & 0.26 & 0.14 & 0.20 & 0.20 & 0.14 & 0.26 \\
 & random & 240 & 0.22 & 0.08 & 0.18 & 0.16 & 0.08 & 0.22 \\
 & full dataset & 280 & 0.20 & 0.14 & 0.16 & 0.17 & 0.14 & 0.20 \\
\bottomrule
\end{tabular}
\caption{Success rates for all robomimic manipulation tasks over 3 seeds. Mean, minimum, and maximum are computed across seeds.}
\label{tab:full_robomimic_results}
\end{table*}

\subsection{Additional Results for Section~\ref{sec:ablations}}
Figure~\ref{fig:siglevels_results} shows the effect of the number of signature levels on the success rate for the Can task.

\begin{figure}[H]
    \centering
    \includegraphics[width=1.0\linewidth, keepaspectratio]{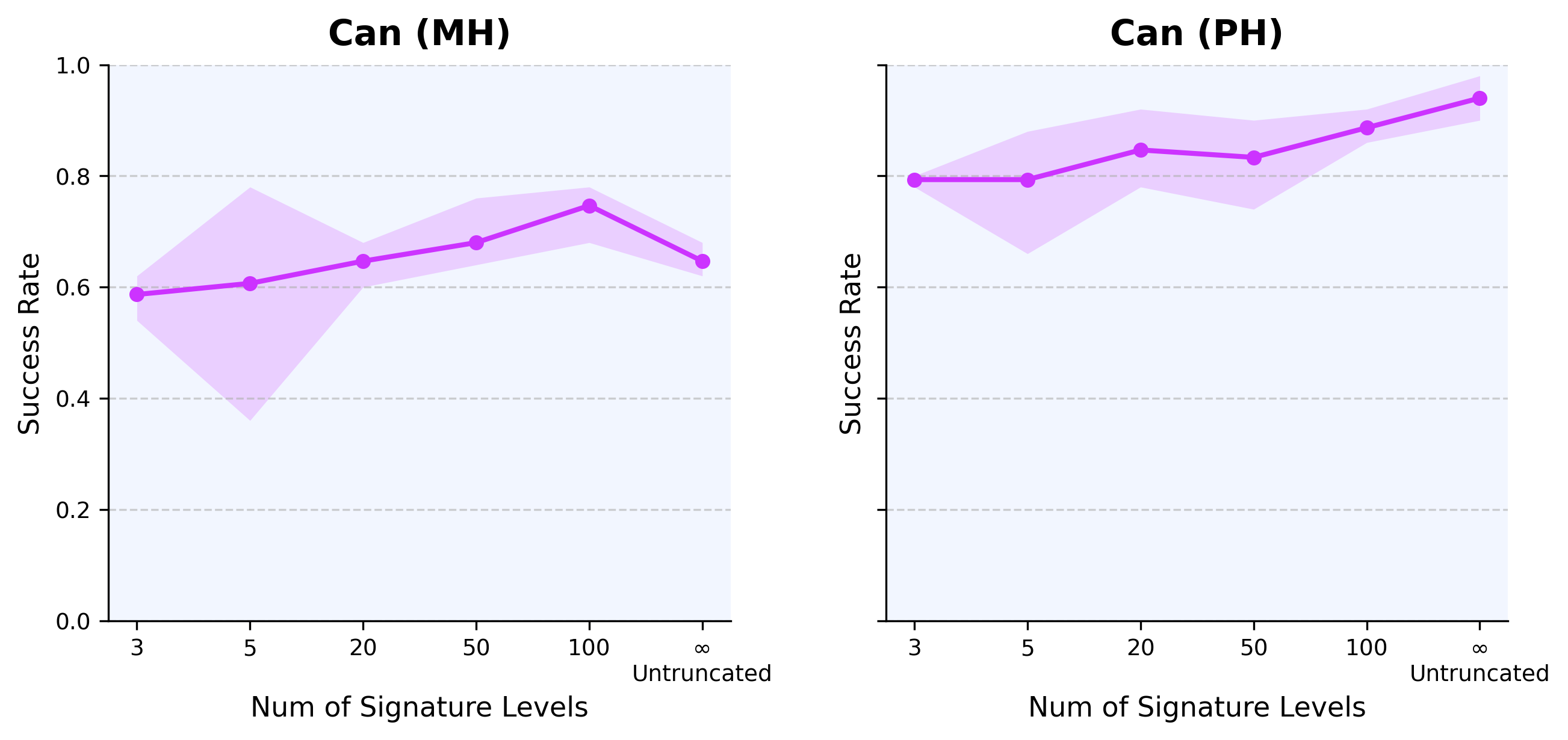}
    \caption{Number of signature levels vs.\ success rate results. Results are averaged over three random seeds, and the error bars indicate the minimum and maximum values. Parameters (such as dataset size) are set to \(m = 120\) and \(p = 0\).}
    \label{fig:siglevels_results}
\end{figure}

\subsection{Additional results for low-dimensional representation method}
In this subsection, we provide a limited evaluation of the low-dimensional representation method discussed in Appendix~\ref{App:demo_to_paths}. Conveniently, Robomimic already supplies low-dimensional observations for each task in its low-dim datasets, which we use for our analysis. The results are as follows.

\begin{table}[H]
    \centering
    \begin{tabular}{l l c}
        \hline
        \textbf{Task} & \textbf{Curated Dataset} & \textbf{Success rate (\%)} \\
        \hline
        Can (PH) & Full dataset (180 episodes) & \(84 \pm 4\) \\
        & Random (90 episodes)       & \(68 \pm 7\) \\
        & Our method (90 episodes)   & \(86 \pm 3\) \\
        \hline
        Can (MH)& Full dataset (270 episodes) & \(74 \pm 6\) \\
        & Random (120 episodes)       & \(62 \pm 2\)  \\
        & Our method (120 episodes)   & \(74 \pm 7\)  \\
        \hline
    \end{tabular}
    \caption{Robomimic low dim results. We used vanilla BC, BC with RNN achieves a 100\% success rate providing no meaningful insight.}
\end{table}

\appsection{Related Works Covering Dataset Diversity in General ML} \label{app:diversity_in_ml}

\paragraph{Measuring Dataset Diversity in Machine Learning} Several commonly used diversity metrics depend on a reference distribution or dataset,
characterize diversity in terms of how well the reference is covered \cite{szegedy2016rethinking, heusel2017gans, cifka2018eval, preuer2018frechet, sajjadi2018assessing, naeem2020reliable}. Other existing metrics assess diversity using a pre-trained classifier, and thus require access to labeled datasets \cite{Salimans2016, Srivastava2017}. 
There are proposed metrics \cite{posada2020gait, shen2019mixture, benhenda2017chemgan} that use similarity scores to define diversity, such similarity scores can include average pairwise similarity score or the complement, the average dissimilarity. 
However, all these methods exhibit drawbacks, such as requiring a reference distribution, relying on manual inspection of samples, or failing to account for correlations between different features. \citet{vendi_score} addresses many of these challenges by introducing the Vendi Score, a principled metric for quantifying dataset diversity. The Vendi score has found several useful extensions throughout computer science for: measuring functional diversity \cite{dieng2025unified}, improving retrieval augmented generation \cite{rezaei2025vendi}, and measuring information gain \cite{nguyen2025vendiinformationgainalternative}. We extend the ideas underlying the Vendi Score to the robotics domain. 

\paragraph{Effect of Dataset Diversity on Model Performance in Machine Learning} \citet{yu-etal-2022-data} show that the diversification of training samples alleviates overfitting and improves model generalization and accuracy in natural language processing. In an imaging-through-scattering setting, \citet{zhang2024crossdatasetgeneralizationdeeplearning} show that increasing the diversity of the training dataset improves the deep neutral network’s ability to approximate the underlying linear physical mapping, thereby enabling robust generalization to unseen datasets. \citet{jung2025prismaticsynthesisgradientbaseddata} use a Vendi-style entropy, G-Vendi. They show that G-Vendi strongly correlates with how a (reasoning) model performs in unseen distributions when trained on that dataset.

\appsection{Proofs} \label{app:proofs}
\begin{lemma}
    Consider the setting from Definitions~\ref{def:sig_shannon_entropy} \& \ref{def:sig_neumann_entropy}. Then 
    \[H^{sig}_\text{von neumann}(d_1, d_2, \cdots, d_n) = H_\text{shannon}^{sig}(d_1, d_2, \cdots, d_n).
    \]
\end{lemma}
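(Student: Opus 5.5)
The plan is to diagonalize the normalized Gram matrix and use the spectral definition of the matrix logarithm.

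Set \(\rho := K^{sig}/n\). First, \(\rho\) is real symmetric, because \(k^{sig}(d_i,d_j)=\langle \varphi(d_i),\varphi(d_j)\rangle = k^{sig}(d_j,d_i)\). It is also positive semidefinite, since it is a Gram matrix of feature vectors: for any \(c\in\mathbb{R}^n\),
\[
c^\top K^{sig} c = \Bigl\|\sum_i c_i \varphi(d_i)\Bigr\|^2 \ge 0 .
\]
Because \(K^{sig}_{ii}=1\), we have \(\operatorname{Tr}\rho = 1\). So \(\rho\) is a density matrix, exactly as noted after Definition~\ref{def:sig_shannon_entropy}. By the spectral theorem we can write \(\rho = U \operatorname{diag}(\lambda_1,\dots,\lambda_n) U^\top\) with \(U\) orthogonal, \(\lambda_i\ge 0\), and \(\sum_i \lambda_i = 1\).

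Second, the matrix function \(\rho\ln\rho\) in Definition~\ref{def:sig_neumann_entropy} is defined through functional calculus. Take the scalar function \(g(x)=x\ln x\) with the continuous extension \(g(0)=0\), and set
\[
\rho\ln\rho := g(\rho) = U\operatorname{diag}\bigl(g(\lambda_1),\dots,g(\lambda_n)\bigr)U^\top .
\]
For a singular \(\rho\), \(\ln\rho\) alone is not defined, but the product is well defined as \(g(\rho)\). I expect this convention to be the only delicate point. It is also the same convention \(0\ln 0=0\) that is implicitly used in the Shannon sum. An equivalent route is to perturb \(\rho\) to \(\rho_\varepsilon = \rho+\varepsilon I\) and let \(\varepsilon\to 0\), using continuity of \(g\) on \([0,\infty)\).

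Third, trace is invariant under orthogonal conjugation, so
\[
-\operatorname{Tr}(\rho\ln\rho) = -\operatorname{Tr}\bigl(\operatorname{diag}(g(\lambda_i))\bigr) = -\sum_i \lambda_i\ln\lambda_i .
\]
The right-hand side is exactly \(H^{sig}_{\text{shannon}}\), where the sum runs over the eigenvalues \(\Lambda\) of \(K^{sig}/n\), counted with multiplicity. I would state explicitly that \(\Lambda\) is to be read as a multiset, since that interpretation is needed for the identity to hold when eigenvalues repeat. With that, the proof is complete.
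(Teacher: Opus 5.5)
Your proof is correct and follows the same route as the paper's: diagonalize the normalized Gram matrix, apply the logarithm spectrally, and use similarity-invariance of the trace to reduce $-\operatorname{Tr}(\rho\ln\rho)$ to $-\sum_i \lambda_i\ln\lambda_i$. Your version is slightly more careful than the paper's in two places. The paper writes $\ln K^{sig}$ directly, without addressing zero eigenvalues, which your convention $g(x)=x\ln x$ with $g(0)=0$ handles. The paper also counts eigenvalues with multiplicity only implicitly, whereas you state it explicitly.
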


\begin{proof}
    Consider the signature kernel matrix \(K^{sig}\) from Definition~\ref{def:sig_shannon_entropy}. Since the signature kernel matrix \(K^{sig}\) is positive semidefinite and diagonalizable, it has an eigendecomposition of the form \(H \,\operatorname{diag}(\lambda_1, \dots, \lambda_n)\, H^{-1}\). Then \(\ln K^{sig} = H \,\operatorname{diag}(\ln \lambda_1, \dots, \ln \lambda_n)\, H^{-1}\). Trivially, \(\text{Tr}\left( H \,\operatorname{diag}(\lambda_1, \dots, \lambda_n)\, H^{-1} \right) = \text{Tr}\left(\operatorname{diag}(\lambda_1, \dots, \lambda_n)\ \right)\) because the trace is similarity-invariant. Then we have,
    \begin{align*}
    \operatorname{Tr}\left(\frac{K^{\text{sig}}}{n}\ln\frac{K^{\text{sig}}}{n}\right)
      &= \operatorname{Tr}\Bigl(
           H \,\operatorname{diag}(\lambda_1, \dots, \lambda_n)\, H^{-1} \\
      &\qquad\qquad
           \ln\bigl(
             H \,\operatorname{diag}(\lambda_1, \dots, \lambda_n)\, H^{-1}
           \bigr)
         \Bigr) \\
      &= \operatorname{Tr}\Bigl(
           H \,\operatorname{diag}(\lambda_1, \dots, \lambda_n)\, H^{-1} \, \\
      &\qquad\qquad
           H\,\operatorname{diag}(\ln \lambda_1, \dots, \ln \lambda_n)\, H^{-1}
         \Bigr) \\
      &= \operatorname{Tr}\Bigl(
           \operatorname{diag}(\lambda_1, \dots, \lambda_n)\,
           \operatorname{diag}(\ln \lambda_1, \dots, \ln \lambda_n)
         \Bigr) \\
      &= \sum_{i=1}^n \lambda_i \ln \lambda_i.
    \end{align*}

    Hence,
    \begin{equation*}
    \begin{aligned}
        H^{sig}_\text{von neumann}(d_1, d_2, \cdots, d_n) &= - \text{Tr}\left(\frac{K^{sig}}{n}\ln\frac{K^{sig}}{n}\right)\\
        &=-\sum_{i=1}^n \lambda_i \ln \lambda_i \\
        &= H_\text{shannon}^{sig}(d_1, d_2, \cdots, d_n).
    \end{aligned}
    \end{equation*}
\end{proof}

\end{document}